\documentclass[11pt]{article}
\usepackage{acl}
\usepackage{times}
\usepackage{latexsym}
\usepackage[T1]{fontenc}
\usepackage{microtype}
\usepackage{inconsolata}
\usepackage{amsmath,amssymb,amsthm}
\usepackage{booktabs}
\usepackage{multirow}
\usepackage{graphicx}
\usepackage{xcolor}
\usepackage{enumitem}
\usepackage{cleveref}
\usepackage{algorithm}
\usepackage{algpseudocode}
\usepackage{bm}
\usepackage{pgfplots}
\usepackage{url}
\pgfplotsset{compat=1.16}

\newtheorem{theorem}{Theorem}

\newtheorem{definition}{Definition}
\newtheorem{assumption}{Assumption}

\newcommand{\wmsar}{\textsc{WM-SAR}}
\newcommand{\geaf}{\widehat{\mathrm{GEAF}}}
\newcommand{\Bop}{\mathbf{B}}
\newcommand{\rhoB}{\rho(\Bop)}
\newcommand{\phiH}{\Phi_H}
\definecolor{wmsarblue}{RGB}{0,105,180}
\definecolor{engred}{RGB}{180,50,30}

\title{Repair the Amplifier, Not the Symptom:\\
Stable World-Model Correction for Agent Rollouts}
\author{\textbf{Xinyuan Song}$^{1}$ \quad
    \textbf{Zekun Cai}$^{2,3}$ \\
    $^{1}$Emory University, Atlanta, GA, USA \quad
    $^{2}$The University of Tokyo, Tokyo, Japan \\
    $^{3}$LocationMind, Tokyo, Japan \\
    \texttt{xinyuan.song@emory.edu, caizekun@csis.u-tokyo.ac.jp} \\
}

\begin{document}
\emergencystretch 3em
\hfuzz=4pt
\maketitle

\begin{abstract}
Long-horizon language agents increasingly maintain executable world models in the form of planning graphs, where tool calls, validators, memory updates, recovery branches, and final answers are connected by typed dependencies. When a rollout fails, repairing the most visible error can leave the underlying error-amplification path intact, while replaying the full graph is expensive and difficult for long-context models to use reliably. We study world-model correction: selecting a compact subgraph of a failed planning graph whose repair stabilizes subsequent rollouts. We first instantiate a strong family of engineering correctors, including pointwise error scans, TopK and window selection, local graph expansion, cascade repair, and full-context LLM repair. We then propose WM-SAR, a spectral subgraph repair method that estimates node-edge amplification, greedily grows a connected repair region by marginal residual-spectral relief, and sends only this region to an LLM for root-cause repair. Theoretically, we connect residual spectral radius to rollout error and planning regret, motivating repair as stabilization rather than attribution alone. Across synthetic calling-tree graphs, benchmark-inspired agent topologies, and cross-model LLM repair experiments, WM-SAR achieves stronger long-horizon stabilization and root-cause recovery under compact token budgets, matching much larger repair contexts while exposing the LLM to a cleaner causal subgraph. Our code is available at: \url{https://github.com/Hik289/world-model-corrector.git}
\end{abstract}

\section{Introduction}
\label{sec:intro}

\paragraph{World models are becoming planning infrastructure.}
World models were introduced as learned simulators that let agents imagine
future states before acting \citep{ha2018world,sutton1991dyna}.  Modern
variants support latent rollouts, search, model-based control, and relational
state prediction \citep{chua2018pets,graph-wm,schrittwieser2020muzero,
dreamerv3,zhang2021worldgraph,feng2025gwm}.  In parallel, LLM agents are
moving from single-turn prediction toward tool use, web navigation, software
engineering, and persistent exploration \citep{react,webarena,swebench,
swe-agent,agentbench,taskbench,wang2023voyager}.  These two trends point to the
same future system: an agent maintains a working world model, plans through it,
updates it with tool feedback, and reuses it over many decisions.

\paragraph{Long planning naturally becomes a graph.}
In a short task, a rollout can be treated as a list of steps.  In a long task,
that view breaks down.  A planner spawns tool calls; validators gate later
actions; memory writes are reused by distant descendants; exceptions trigger
recovery branches; final answers depend on several earlier sub-trajectories.
The resulting object is a planning graph, not a flat transcript.  As agents are
deployed for longer workflows, this graph may contain thousands or tens of
thousands of nodes.  Replanning the whole graph after each error is therefore
not a viable maintenance strategy.  It consumes the context budget, forces the
LLM to retrieve the relevant cause from a sea of irrelevant nodes, and is
especially risky because long-context models can fail to use information placed
away from the prompt boundaries \citep{liu2024lostmiddle}.

\paragraph{This creates the need for world-model correctors.}
A corrector is a maintenance robot for a planning graph.  When a failure
appears, the corrector should decide which part of the graph to revise, pass
that region to an LLM or another repair module, and keep the rest of the world
model intact.  This is different from ordinary replanning: the corrector must
identify a small causal region inside a much larger graph.  If it chooses only
the visible symptom, the next rollout can recreate the failure; if it chooses
too much of the graph, the method becomes a costly full replay.

\paragraph{A strong default is an engineering LLM corrector.}
The most direct implementation is to scan the planning graph with engineering
rules.  One can rank nodes by error, take the top-$K$ suspicious nodes, inspect
high-error edges, choose a temporal window around the failure, expand a
$k$-hop neighbourhood, run a cascade scan, or serialize a large region for an
LLM root-cause prompt.  This family is attractive because it is simple,
model-agnostic, and compatible with feedback, self-refinement, and replanning
methods \citep{alfworld-repair,llm-repair-plan,reflexion2023,lats2024,
cobbe2021verifiers,lightman2023verify}.  We implement this engineering family
carefully, including LLM-based versions.  The result is a meaningful baseline:
engineering correctors can work, and full-graph LLM repair can be strong on
small graphs.  Their limitation is the same limitation that motivates the
problem.  They spend budget by scanning visible evidence, so they may miss the
small subgraph that re-amplifies error, and their best variant relies on
contexts that become expensive and noisy as the planning graph grows.

\paragraph{Our corrector works backward from amplification.}
We propose \textbf{\wmsar{}} (\textbf{W}orld-\textbf{M}odel \textbf{S}ubgraph
\textbf{A}mplification \textbf{R}epair).  Instead of scanning for the node that
looks most wrong, \wmsar{} asks which connected subgraph keeps the residual
world model unstable.  We represent a failed rollout as a typed graph and
define a residual node-edge error operator.  Its spectral radius determines
whether remaining errors decay or amplify in the next rollout; proofs are in
the appendix.  \wmsar{} uses this quantity to work backward from amplification:
it scores local amplification, estimates node-edge coupling, grows a connected
region by marginal spectral relief, and then sends only that region to an LLM.
Figure~\ref{fig:intuition} shows why engineering repair and amplification
repair lead to different outcomes.

\begin{figure*}[t]
\centering
\includegraphics[width=.86\textwidth]{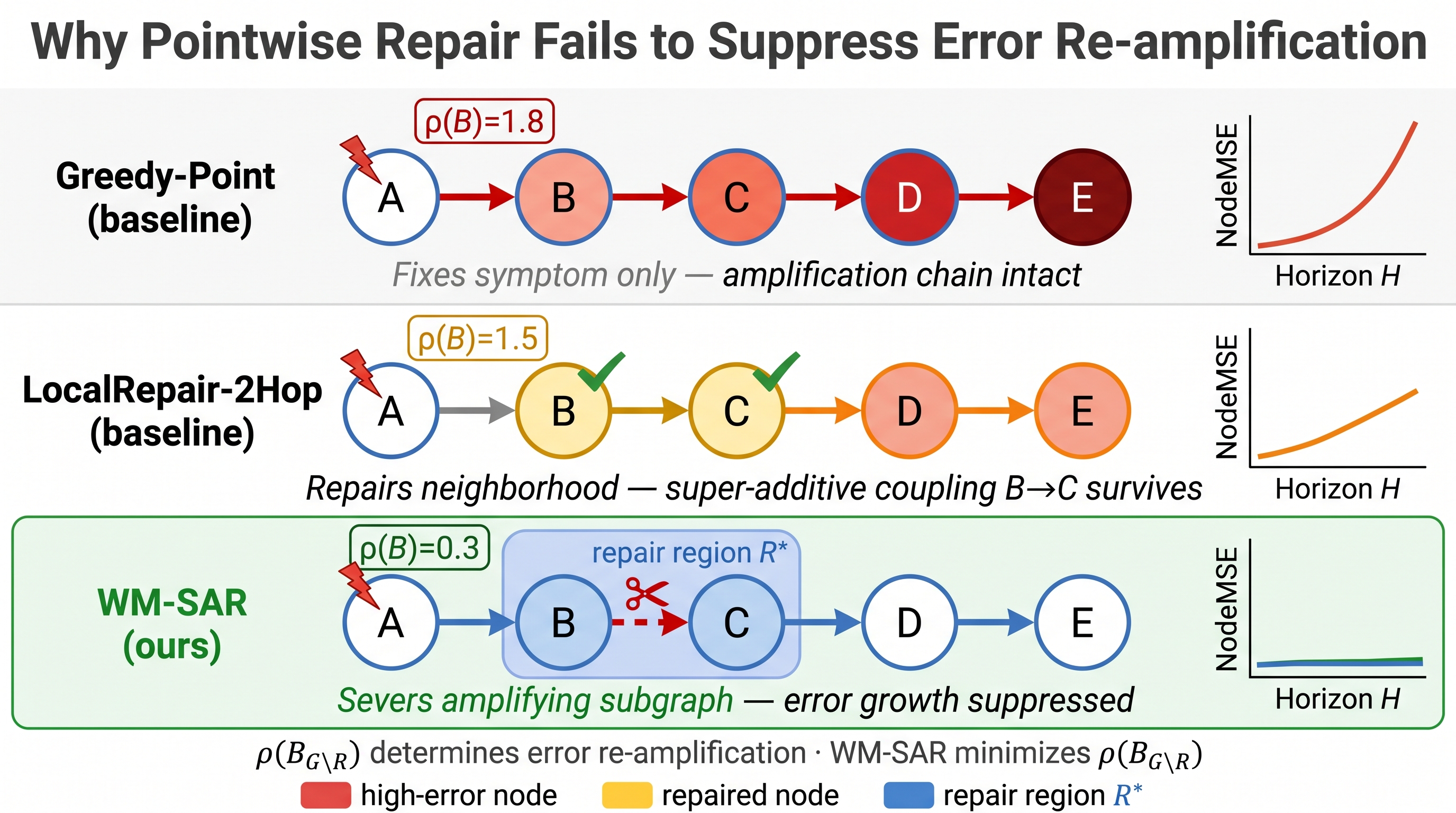}
\caption{World-model correction should target the amplifier rather than the
  most visible symptom.  Greedy repair fixes the highest-error node but leaves
  the causal path active.  A fixed local neighbourhood can still miss the
  node-edge coupling that recreates the failure.  \wmsar{} selects the compact
  connected path whose removal lowers residual amplification, which flattens
  the empirical NodeMSE curve over horizon $H$.}
\label{fig:intuition}
\end{figure*}

\paragraph{Contributions.}
\begin{enumerate}[leftmargin=*,noitemsep]
\item[$\bullet$] \textbf{Problem formulation}: we formulate world-model
  correction as maintaining a failed planning graph under a limited repair
  budget, rather than re-planning the full graph.
\item[$\bullet$] \textbf{Engineering correctors}: we implement a strong family
  of LLM and non-LLM engineering correctors based on node scans, windows,
  neighbourhoods, cascades, and full-graph prompting.
\item[$\bullet$] \textbf{Amplification corrector}: we introduce \wmsar{}, which
  infers the amplifying subgraph using GEAF, node-edge coupling, and marginal
  residual-spectral relief before invoking an LLM.
\item[$\bullet$] \textbf{Theory and evidence}: we prove that residual spectral
  radius controls rollout error and planning regret, and show empirically that
  \wmsar{} outperforms engineering correctors while using substantially smaller
  repair contexts.
\end{enumerate}

\section{Related Work}
\label{sec:related}

\paragraph{World models for planning.}
World-model planning traces back to architectures that interleave learning,
planning, and acting \citep{sutton1991dyna}; modern learned world models
simulate future states in latent space \citep{ha2018world,graph-wm,dreamerv3}
and can be coupled with search \citep{schrittwieser2020muzero}.  Probabilistic
dynamics models propagate uncertainty through imagined rollouts
\citep{chua2018pets}, while model-based policy optimization studies when short
learned rollouts remain trustworthy under accumulated model bias
\citep{janner2019trust}.
\citet{asadi2018lipschitz} establish Lipschitz continuity as a sufficient
condition for bounded rollout error---a scalar criterion that our graph
operator view lifts to structured agent plans.
Graph-structured world models \citep{zhang2021worldgraph,feng2025gwm,
gnn-dynamics,graph-wm} extend this to relational state spaces, where message
passing predicts how entities and dependencies evolve together.
\citet{anokhin2024arigraph} build episodic graph memories for agent planning.
Recent LLM-agent systems also use world knowledge or environment dynamics to
guide planning \citep{qiao2024wkm,webworldmodels2025}.
Our work addresses the previously unstudied problem of \emph{how to repair}
a failed GWM rollout: which subgraph to target and why.

\paragraph{LLM agents and failure repair.}
ReAct \citep{react} interleaves reasoning and acting;
Toolformer \citep{toolformer}, Gorilla \citep{gorilla2024}, RestGPT
\citep{restgpt2023}, and ToolLLM \citep{toolllm2024} teach LLMs to invoke tools
or APIs.
Self-Refine \citep{llm-repair-plan} applies iterative self-feedback to
improve generation quality.
Reflexion \citep{reflexion2023} stores verbal feedback in memory, while LATS
\citep{lats2024} combines language-agent planning with tree search.
Verifier-based and process-supervised approaches provide more granular
feedback for multi-step reasoning \citep{cobbe2021verifiers,lightman2023verify}.
Retroformer \citep{alfworld-repair} uses policy gradient retrospection to
correct failed ALFWorld \citep{alfworld} trajectories.
AgentBench \citep{agentbench} and AgentBoard \citep{agentboard} benchmark
LLM agents across diverse tasks;
TaskBench \citep{taskbench} evaluates tool-chaining task automation.
SWE-bench \citep{swebench} and SWE-agent \citep{swe-agent} benchmark code
repository repair; WebArena \citep{webarena} benchmarks realistic web
navigation agents.  Section~\ref{app:benchmarks} simulates benchmark-inspired
topologies to test whether \wmsar{} depends on the synthetic calling-tree
shape.
\citet{failure-attr-multi-agent} attribute task failures to specific agents
in multi-agent LLM systems---a complementary problem to ours
(attribution vs.\ repair).
\citet{agentlens} visualise LLM agent behaviour for interactive analysis.
All these methods rely on linear trace scanning or full-trajectory replay;
none uses graph spectral analysis to select a \emph{minimal causal subgraph}
for targeted LLM repair.

\paragraph{Graph-theoretic error analysis.}
Spectral graph theory \citep{spectral-graph} links graph eigenvalues to
diffusion and propagation phenomena.
\citet{graph-error-amp} study over-squashing in GNNs via Ricci curvature,
showing that bottleneck edges cause information loss---our $\kappa_v$ coupling
factor identifies the analogous \emph{amplification bottlenecks}.
\citet{mpnn} establish the message-passing neural network framework that
underlies most GWM architectures.
Graph anomaly detection \citep{subgraph-anomaly} and fault localisation
\citep{fault-subgraph} similarly seek to identify critical subgraphs,
but in static graphs without a planning-horizon regret objective.
Our GEAF (Graph Error Amplification Field) is the first per-node quantity
derived from rollout error theory that enables \emph{prospective}
identification of amplification sources before invoking an LLM.

\paragraph{Subgraph selection and PageRank.}
PageRank \citep{pagerank} scores nodes by global influence; TopK selection
and $k$-hop expansion are standard heuristics in graph analysis.
\citet{failure-mode-llm} catalogue failure modes of LLM-based reward models,
noting that local corrections can leave global pathology intact---consistent
with our finding that node-level repair does not reduce $\rhoB$.
WM-SAR instead grows a \emph{connected} region greedily under a
$\Delta\rho$-relief criterion, directly targeting the spectral quantity
that controls the planning-regret bound.

\section{Rollout Error Theory}
\label{sec:background}

We now state why a corrector should optimize amplification rather than visible
error.  In a large planning graph, the immediate symptom can be far downstream
from the node or edge that will recreate the failure in the next rollout.  A
repair is therefore useful only if it lowers the operator that propagates both
node-state and edge-state errors.  Proofs are deferred to the appendix.

\begin{definition}[Node-state rollout error]
For a graph world model with true node states $X_k$ and predicted node states
$\hat X_k$ at rollout step $k$, define
$e^X_k=\|\hat X_k-X_k\|_F$.  Let $\epsilon_X$ denote the one-step model
approximation error.
\end{definition}

\begin{assumption}[Fixed-edge Lipschitz rollout]
\label{assump:fixed_lip}
For a fixed graph with adjacency matrix $A$, the message-passing update is
globally Lipschitz in the node state with constant
\begin{equation}
L_X=L_\sigma\rho(A)\prod_\ell\|W_\ell\|_2,
\label{eq:fixed_lx}
\end{equation}
where $L_\sigma$ is the activation Lipschitz constant and $W_\ell$ are the
message-passing layer matrices.
\end{assumption}

\begin{theorem}[Fixed-edge node-error growth]
\label{thm:fixed_edge}
Under Assumption~\ref{assump:fixed_lip}, if
$e^X_{k+1}\le L_X e^X_k+\epsilon_X$, then for every $k\ge1$,
\begin{equation}
e^X_k \le L_X^k e^X_0+
\epsilon_X\frac{L_X^k-1}{L_X-1}.
\label{eq:fixed_growth}
\end{equation}
For $L_X=1$, the second term is $k\epsilon_X$.
\end{theorem}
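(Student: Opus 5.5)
The bound follows by induction on $k$ from the one-step recursion $e^X_{k+1}\le L_X e^X_k+\epsilon_X$. Here $L_X\ge 0$, since it is a product of the nonnegative quantities $L_\sigma$, $\rho(A)$ and $\|W_\ell\|_2$ in \eqref{eq:fixed_lx}. Nonnegativity matters because multiplying an inequality by $L_X$ must preserve its direction. Assumption~\ref{assump:fixed_lip} only justifies that the recursion is the natural consequence of global Lipschitzness plus a one-step approximation error. For the theorem we take the recursion as given, as the statement does.

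\textbf{Base case.} For $k=1$, the right-hand side of \eqref{eq:fixed_growth} is $L_X e^X_0+\epsilon_X\cdot\frac{L_X-1}{L_X-1}=L_X e^X_0+\epsilon_X$. This is exactly the recursion at $k=0$.

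\textbf{Inductive step.} Assume the claim holds at $k$. Then
\begin{equation*}
e^X_{k+1}\le L_X e^X_k+\epsilon_X\le L_X^{k+1}e^X_0+\epsilon_X\Big(L_X\frac{L_X^k-1}{L_X-1}+1\Big).
\end{equation*}
It remains to check the geometric-sum identity $L_X\frac{L_X^k-1}{L_X-1}+1=\frac{L_X^{k+1}-1}{L_X-1}$. A cleaner route is to write $\frac{L_X^k-1}{L_X-1}=\sum_{j=0}^{k-1}L_X^j$ throughout. Then the inductive step is $L_X\sum_{j=0}^{k-1}L_X^j+1=\sum_{j=0}^{k}L_X^j$, which holds for every $L_X\ge0$ with no division.

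\textbf{The case $L_X=1$.} The closed-form quotient is undefined at $L_X=1$. We interpret it as the sum $\sum_{j=0}^{k-1}L_X^j$, which equals $k$ when $L_X=1$. So the second term becomes $k\epsilon_X$, as stated. Equivalently, this is the limit of $\frac{L_X^k-1}{L_X-1}$ as $L_X\to1$.

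The only delicate point is this degenerate case $L_X=1$. Proving everything in summation form and converting to the closed form only for $L_X\neq1$ handles it.
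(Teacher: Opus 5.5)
Your proof is correct and follows essentially the same route as the paper: induction on the unrolled bound $e^X_k\le L_X^k e^X_0+\epsilon_X\sum_{i=0}^{k-1}L_X^i$, using $L_X\ge0$, then converting the geometric sum to the closed form when $L_X\neq1$ and to $k$ when $L_X=1$. The only difference is that the paper starts the induction at $k=0$ with the empty sum, while you start at $k=1$.
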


The theorem separates two sources of risk: one-step approximation error and
structural amplification.  Engineering correctors mostly observe the first
term through $e(v)$.  \wmsar{} also estimates the second term with
$\geaf_v=e(v)\rho(A_v)w^H$, where $\rho(A_v)w^H$ is the local capacity to
amplify the observed error.

\begin{assumption}[Coupled node-edge channel]
\label{assump:coupled}
Let $z_k=(e^X_k,e^A_k)^\top$ collect node-state and edge-state errors, and let
$\varepsilon=(\epsilon_X,\epsilon_A)^\top$.  Their one-step linearized upper
envelope is
\begin{align}
z_{k+1} &\preceq \Bop z_k+\varepsilon,
\label{eq:coupled_recursion}\\
\Bop=
\begin{pmatrix}
L_X & L_A\\
M_X & M_A
\end{pmatrix},
\nonumber
\end{align}
where all four entries are nonnegative.  Here $\preceq$ denotes elementwise
inequality.
\end{assumption}

\begin{theorem}[Coupled node-edge amplification]
\label{thm:coupled_operator}
Under Assumption~\ref{assump:coupled}, the spectral radius of the coupled
operator is
\begin{align}
\Delta_B &= \sqrt{(L_X-M_A)^2+4L_AM_X},
\nonumber\\
\rhoB &= \frac{L_X+M_A+\Delta_B}{2}.
\label{eq:coupled_rho}
\end{align}
If $L_AM_X>0$, then $\rhoB>\max(L_X,M_A)$.
\end{theorem}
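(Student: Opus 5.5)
The plan is to compute the eigenvalues of the $2\times2$ matrix $\Bop$ directly from its characteristic polynomial. Then we use nonnegativity of the entries to identify which root is the spectral radius.

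First, the characteristic polynomial is
\[
p(\lambda)=\lambda^2-(L_X+M_A)\lambda+(L_XM_A-L_AM_X).
\]
Its discriminant is
\[
(L_X+M_A)^2-4(L_XM_A-L_AM_X)=(L_X-M_A)^2+4L_AM_X=\Delta_B^2.
\]
This is nonnegative because $L_AM_X\ge0$, so both eigenvalues are real:
\[
\lambda_\pm=\frac{L_X+M_A\pm\Delta_B}{2}.
\]

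Second, to show $\rhoB=\lambda_+$, we need $|\lambda_-|\le\lambda_+$. We have $\lambda_+\ge0$ because $L_X+M_A\ge0$ and $\Delta_B\ge0$. Also $\lambda_+-\lambda_-=\Delta_B\ge0$ gives $\lambda_-\le\lambda_+$. Finally,
\[
\lambda_++\lambda_-=L_X+M_A\ge0,
\]
so $-\lambda_-\le\lambda_+$. Together these give $|\lambda_-|\le\lambda_+$, and hence $\rhoB=\lambda_+$, which is \eqref{eq:coupled_rho}. As a consistency check, this agrees with the Perron--Frobenius theorem for nonnegative matrices, where the spectral radius is itself an eigenvalue. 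The direct argument avoids needing irreducibility.

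Third, for the strict inequality, assume $L_AM_X>0$. Then
\[
\Delta_B>\sqrt{(L_X-M_A)^2}=|L_X-M_A|.
\]
Hence
\[
\rhoB>\frac{L_X+M_A+|L_X-M_A|}{2}=\max(L_X,M_A),
\]
using the identity $\tfrac{a+b+|a-b|}{2}=\max(a,b)$.

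The only step requiring care is identifying the spectral radius with $\lambda_+$ rather than $|\lambda_-|$. This is exactly where the sign hypothesis on the trace, $L_X+M_A\ge0$, and the nonnegativity of $L_AM_X$, which makes $\Delta_B$ real, are used. Everything else is routine algebra.
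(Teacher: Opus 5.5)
Your proof is correct and follows essentially the same route as the paper: characteristic polynomial, closed-form roots $\lambda_\pm$, identification of $\rho(\Bop)$ with $\lambda_+$, and strict super-additivity from $\Delta_B>|L_X-M_A|$. The differences are minor. You check $|\lambda_-|\le\lambda_+$ directly from $\lambda_++\lambda_-=L_X+M_A\ge0$, where the paper cites Perron--Frobenius, and you use $\max(a,b)=\tfrac{a+b+|a-b|}{2}$, where the paper splits into the cases $L_X\ge M_A$ and $M_A\ge L_X$.
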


The local coupling score $\kappa_v=L_A(v)M_X(v)$ estimates how strongly node
$v$ participates in this super-additive node-edge channel.  A node can
therefore be dangerous even when its observed error is not the largest: it may
sit on a node-edge channel that increases the Perron root of the residual
operator.

\begin{assumption}[Value sensitivity]
\label{assump:value_sensitivity}
For any policy $\pi$, the reward is $L_R$-Lipschitz in rollout state, the
policy selection map has sensitivity $\kappa$, and the reward approximation
error per step is bounded by $\epsilon_R$.  The coupled model error satisfies
$\|z_k\|\le \epsilon \rhoB^k$ for $z_k=(e^X_k,e^A_k)$.
\end{assumption}

\begin{theorem}[Planning-regret bound]
\label{thm:planning_regret}
Let $\pi^*$ be the optimal policy under the true rollout dynamics and
$\hat\pi$ the policy selected under the learned GWM.  Under
Assumption~\ref{assump:value_sensitivity},
\begin{equation}
J(\pi^*)-J(\hat\pi)
\le
2L_R\kappa\epsilon \phiH(\gamma,\rhoB)+2\epsilon_R H,
\label{eq:regret}
\end{equation}
where
\begin{equation}
\phiH(\gamma,\rho)=\sum_{t=0}^{H-1}(\gamma\rho)^t.
\label{eq:phi}
\end{equation}
\end{theorem}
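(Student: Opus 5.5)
The plan is the standard simulation-lemma argument: split the regret into two model-versus-truth value gaps, and bound each by summing per-step reward discrepancies controlled by Assumption~\ref{assump:value_sensitivity}. Let $J$ denote the true discounted return over horizon $H$ and $\hat J$ the return evaluated under the learned GWM rollout with the learned reward. Since $\hat\pi$ is selected as optimal under the model, $\hat J(\hat\pi)\ge \hat J(\pi^*)$. This gives the decomposition
\begin{align*}
J(\pi^*)-J(\hat\pi)
&=\bigl[J(\pi^*)-\hat J(\pi^*)\bigr]+\bigl[\hat J(\pi^*)-\hat J(\hat\pi)\bigr]
+\bigl[\hat J(\hat\pi)-J(\hat\pi)\bigr]\\
&\le 2\sup_{\pi\in\{\pi^*,\hat\pi\}}\bigl|J(\pi)-\hat J(\pi)\bigr|.
\end{align*}
The factor $2$ in both terms of \eqref{eq:regret} comes from this step. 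It therefore suffices to show, for a fixed policy $\pi$,
\[
|J(\pi)-\hat J(\pi)|\le L_R\kappa\epsilon\,\phiH(\gamma,\rhoB)+\epsilon_R H .
\]

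For a fixed $\pi$, write $J(\pi)-\hat J(\pi)=\sum_{t=0}^{H-1}\gamma^t\bigl(r_t-\hat r_t\bigr)$. At each step I would insert the true reward evaluated at the predicted state and action, which splits the per-step gap into two parts.
\begin{itemize}
\item A \emph{reward-approximation part}, bounded by $\epsilon_R$. Bounding $\gamma^t\le 1$ and summing over $t$ gives $\epsilon_R H$. This is deliberately loose, but it matches the stated form.
\item A \emph{state-discrepancy part}. Here the reward is $L_R$-Lipschitz in rollout state, and the action chosen by $\pi$ at the predicted state differs from that at the true state by at most a factor $\kappa$ of the state error. I would fold both effects into a per-step bound of $L_R\kappa\|z_t\|$, reading the "policy selection sensitivity $\kappa$" as the constant that turns state error into joint state–action error.
\end{itemize}
By Assumption~\ref{assump:value_sensitivity}, $\|z_t\|\le\epsilon\rhoB^t$, so the state-discrepancy part contributes at most
\[
L_R\kappa\epsilon\sum_{t=0}^{H-1}\gamma^t\rhoB^t=L_R\kappa\epsilon\,\phiH(\gamma,\rhoB),
\]
by the definition \eqref{eq:phi}.

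The main obstacle is making the $\kappa$ step rigorous. Under the model the policy acts on predicted states, so the trajectories diverge. One must argue that $z_t$ as bounded in the assumption already accounts for the accumulated divergence under $\pi$'s actions; otherwise compounding policy-induced divergence would add extra powers. My approach is to interpret Assumption~\ref{assump:value_sensitivity} as bounding the coupled error along closed-loop rollouts of any policy, which is exactly what the hypothesis "for any policy $\pi$" provides. The per-step bound then applies directly without further recursion. This also links the result to Theorem~\ref{thm:coupled_operator}: $\rhoB$ is the growth rate of the envelope \eqref{eq:coupled_recursion}.

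A secondary check is the degenerate case $\gamma\rhoB=1$, where $\phiH=H$. The sum is finite, so no case split is needed.
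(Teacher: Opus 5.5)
Your proposal is correct and takes essentially the same route as the paper: the three-term decomposition using $\widehat J(\pi^*)-\widehat J(\hat\pi)\le 0$, combined with the per-policy gap $|J(\pi)-\widehat J(\pi)|\le L_R\kappa\sum_{t=0}^{H-1}\gamma^t\|z_t\|+\epsilon_R H$ and then $\|z_t\|\le\epsilon\rhoB^t$. Like you, the paper does not prove the $\kappa$ step or the closed-loop divergence separately; it treats both as supplied directly by Assumption~\ref{assump:value_sensitivity}.
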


The bound makes the repair criterion explicit.  For a repaired region $R$, the
future rollout is controlled by the residual spectral radius
$\rho(\Bop_{G_f\setminus R})$.  A scanning corrector can repair high-error
nodes while leaving this quantity large.  \wmsar{} instead chooses the
connected subgraph whose correction lowers the residual operator, which is why
it can outperform engineering correctors even when they see comparable local
evidence.

\section{Problem Formulation}
\label{sec:formulation}

The repair problem is observable after a failure: we have the failed trace,
typed dependencies, node-level error estimates, and a budget for re-running or
revising part of the trace.  The future deployment constraint is that this
budget is small relative to the full planning graph.  A corrector cannot assume
that it can serialize thousands of nodes into an LLM prompt after every
failure.  The target is therefore not the largest error node, but the smallest
connected region whose correction makes the remaining world model stable.

\begin{definition}[Failure graph]
A failed rollout $\tau$ is represented as
$G_f=(V,E,\mathbf{X},t^\star)$, where nodes are agent calls and edges are causal
dependencies.  Each node $v\in V$ carries an observed error magnitude $e(v)$,
uncertainty $u(v)$, repair cost $c(v)$, and a node type
(planner, executor, validator, checker, aggregator, reporter, logger,
error\_handler, or final\_answer).  Edge types encode calls, validation,
reporting, error routing, triggering, and logging.  The sink $t^\star$ is the
final-answer node, and $\mathcal{R}^*$ denotes the ground-truth corrupted
region used only for evaluation.
\end{definition}

\begin{definition}[Repair objective]
\label{def:obj}
Given budget $K_{\max}$, the spectral repair region is
\begin{equation}
R^*=\arg\min_{R\subseteq V: G_f[R]\text{ connected}, |R|\le K_{\max}}
\rho(\Bop_{G_f\setminus R}).
\end{equation}
\end{definition}
Exactly minimizing $\rho(\Bop_{G_f\setminus R})$ over connected subsets with
bounded size is NP-hard in general because it contains minimum spectral cut as
a special case.  \wmsar{} is the greedy approximation used in our experiments.

Engineering baselines instead optimize variants of $\sum_{v\in R}e(v)$.  This
is a useful attribution signal, but it is not a stability objective: removing a
large downstream symptom may leave the residual Perron root unchanged.

\section{WM-SAR Algorithm}
\label{sec:method}

\begin{figure*}[t]
\centering
\includegraphics[width=.88\textwidth]{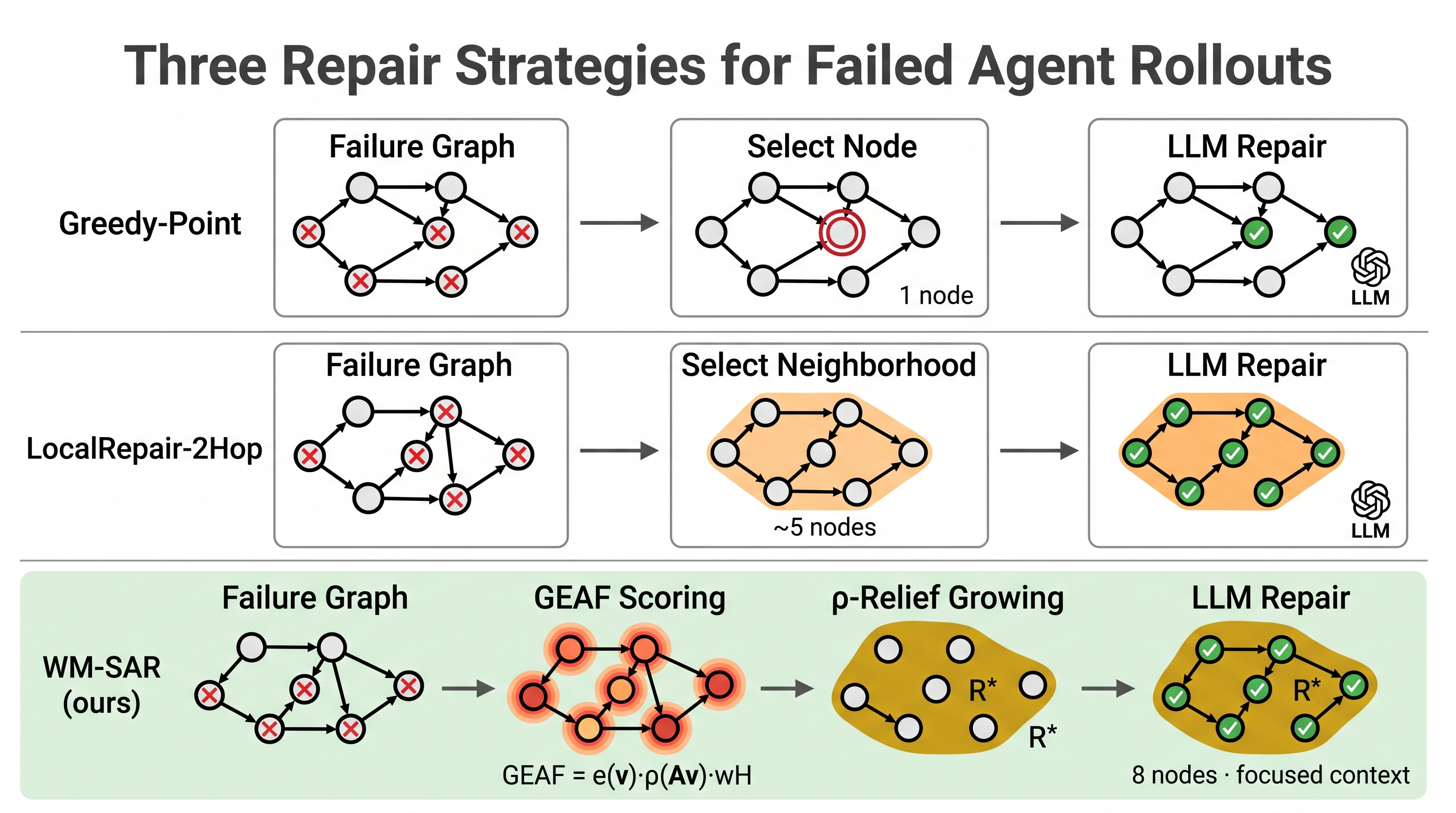}
\caption{\wmsar{} workflow.  Region selection first scores every node by
  local amplification and node-edge coupling, seeds compact candidate regions,
  expands them by marginal residual-spectral relief, and prunes redundant
  nodes.  Only the final connected region is serialized for LLM root-cause
  repair, keeping the language-model context focused on the amplifying
  subgraph.}
\label{fig:pipeline}
\end{figure*}

The corrector has two responsibilities: localize a repair region and ask a
repair model to act on it.  Engineering correctors solve the first step with
trace rules, such as choosing the largest-error node, taking a TopK set,
opening a temporal window, expanding a local neighborhood, or serializing the
whole graph for an LLM.  These rules are attractive because they are simple and
often strong on small graphs.  \wmsar{} keeps the same downstream repair model
but changes the localization objective.  It estimates which connected subgraph
keeps the residual world model amplifying error, and only that subgraph is
shown to the LLM.  Figure~\ref{fig:pipeline} summarizes the workflow.  The
graph phase follows the spectral-graph principle that eigenvalues govern
propagation \citep{spectral-graph}; the LLM phase follows the tool-context
pattern used by modern agent and API systems
\citep{react,toolformer,gorilla2024,restgpt2023,toolllm2024}.  The purpose is
not to replace LLM reasoning, but to spend the LLM context on the part of the
graph that can change the next rollout.

\subsection{Phase 1: Finding the Repair Region}
\label{sec:region}

\paragraph{Step 1 - Per-node GEAF score.}
\wmsar{} begins with the observation that high error and high influence are
different signals.  A downstream node can have the largest visible error while
contributing little to future amplification; an upstream node can have moderate
error but sit on many error-carrying paths.  The fixed-edge theorem says that
rollout error grows with the local Lipschitz-amplification scale, so we use the
Graph Error Amplification Field (GEAF) as a graph-side proxy:
\begin{equation}
\geaf_v
= e(v) \rho(A_v) w^H,
\label{eq:geaf}
\end{equation}
where $e(v)$ is the observed error at $v$; $A_v$ is the $n_v \times n_v$
adjacency matrix of $v$'s $H$-hop subgraph $G_v$; $\rho(A_v)$ is its spectral
radius, measuring how much $v$'s neighborhood amplifies signals; and
$w\in(0,1)$ is a
horizon discount factor.  All default hyperparameters are listed in
Appendix~\ref{app:hyper}.
We use this spectral-radius proxy rather than a walk-count alternative
($\sum_{k=1}^H(A^k\mathbf{1})_v$); both are monotonic in structural
amplification.  This is analogous in spirit to influence scores such as
PageRank, but it is local to a repair candidate and tied to rollout stability
\citep{pagerank,spectral-graph}.

Intuitively, $\rho(A_v)$ measures \emph{structural} amplification--how
many independent paths can carry error out of $v$--while $e(v)$ measures the
\emph{observed} error magnitude.
A node with high $e(v)$ but low $\rho(A_v)$ is a dead-end symptom node;
a node with low $e(v)$ but high $\rho(A_v)$ is a latent amplifier.
\wmsar{} targets nodes where both are high.

\paragraph{Step 2 - Coupling factor.}
From the coupled node-edge theorem, the \textbf{coupling factor}
$\kappa_v = L_A(v)\cdot M_X(v)$
captures the interaction between $v$'s state channel and its edge channel:
\begin{align}
L_A(v) &= w \beta_A \bar{d}_{\rm in}(v)\cdot\bar{e}(v), \nonumber\\
M_X(v) &= w \beta_X \bar{d}_{\rm out}(v)\cdot\bar{e}(v),
\end{align}
where $\bar{d}_{\rm in}(v)$ / $\bar{d}_{\rm out}(v)$ are the average in/out
edge-type diversity of $v$, $\bar{e}(v)$ is the mean error in $v$'s
neighborhood, and $\beta_A,\beta_X$ are calibration constants described in
Appendix~\ref{app:hyper}.
High-$\kappa_v$ nodes lie on cross-coupling paths, where a repair can reduce
$\rhoB$ through both node-state and edge-state dynamics.

\paragraph{Step 3 - Seed selection.}
We rank nodes by a seed score that favors visible error only when it is paired
with amplification and coupling:
\begin{align}
\mathrm{score}_{\rm seed}(v)
&= e(v) \geaf_v (1+\kappa_v).
\label{eq:seed_score}
\end{align}
and select the top-$k_0$ nodes as candidate seeds.
Each seed $s$ initiates an independent growing process.

\paragraph{Step 4 - $\rhoB$-relief growing.}
This is the key difference from scan-based repair.  Starting from
$R=\{s\}$, \wmsar{} grows a connected region by repeatedly adding the boundary
node that gives the largest marginal reduction in the residual amplifier:
\begin{align}
u^*
&=\arg\max_{u\in\partial R} g_R(u),
\label{eq:grow}\\
g_R(u)
&=\lambda_1 e(u)(1+\kappa_u)
  +\lambda_2\Delta\rho_{\rm rel}(u)
  -\lambda_3 c(u).
\nonumber
\end{align}
where $\partial R$ is the frontier of nodes adjacent to $R$ in $G_f$;
\begin{align}
\Delta\rho_{\rm rel}(u)
&= \rho(\Bop_{G_f\setminus R})
 - \rho(\Bop_{G_f\setminus(R\cup\{u\})}), \label{eq:delta_rho}\\
c(u)
&= 1+\frac{|N(u)\cap R|}{|R|}. \label{eq:cost}
\end{align}
Here $\Delta\rho_{\rm rel}(u)$ is the marginal $\rhoB$-reduction gained by
adding $u$, and $c(u)$ penalizes redundant already-covered nodes.
$\lambda_1, \lambda_2, \lambda_3$ balance error coverage, spectral relief, and cost penalty
(values in Appendix~\ref{app:hyper}).
Growing stops when no node provides positive gain or the budget $K_{\max}$ is
reached.

\noindent\textbf{Why this helps.}
TopK repair can select disconnected symptom nodes.  Window repair assumes that
temporal proximity is causal proximity.  LocalRepair spends budget by graph
distance, which can include harmless neighbors while missing a high-gain
coupling path.  Full-graph LLM repair avoids localization but pays the cost of
serializing the entire plan, and long contexts are not always used reliably by
current models \citep{liu2024lostmiddle}.  \wmsar{} instead asks the question
used in the regret bound: how much does this candidate reduce the residual
operator that will govern the next rollout?  This places it closer to
fault-localization and critical-subgraph selection
\citep{fault-subgraph,subgraph-anomaly} than to generic trace scanning.

\paragraph{Step 5 - Pruning and final selection.}
After growing, \wmsar{} removes redundant nodes: delete any $v \in R$ such that
$\rho(\Bop_{G_f\setminus(R\setminus\{v\})})\le\rho(\Bop_{G_f\setminus R})$
(removing $v$ does not increase residual amplification).
Among all $k_0$ candidate regions, we return the one maximising
\begin{align}
\mathrm{Score}(R)
&=
\frac{
  \left(\sum_{v\in R} e(v)(1+\kappa_v)\right)
  \Delta\rho_{\rm rel}(R)
}{
  1+|R|
}.
\label{eq:region_score}
\end{align}

This score is deliberately residual.  It rewards regions that lower
$\rho(\Bop_{G_f\setminus R})$, not regions that merely contain high-error
nodes.  Uniform uncertainty in this dataset carries no ranking signal, so the
uncertainty term is inactive at the default hyperparameters; the ablation in
Table~\ref{tab:ablation} verifies that removing it has no effect.

Algorithm~\ref{alg:wmsar} gives the full greedy procedure.

\begin{algorithm}[t]
\caption{\wmsar{} Region Selection (greedy approximation to Definition~\ref{def:obj})}
\label{alg:wmsar}
\begin{algorithmic}[1]
\Require Failure graph $G_f$, budget $K_{\max}$, parameters $(H,w,k_0,\lambda_1,\lambda_2,\lambda_3)$
\Ensure Connected repair region $R^*$
\State Compute $\geaf_v \gets e(v)\cdot\rho(A_v)\cdot w^H$ and
       $\kappa_v \gets L_A(v)\cdot M_X(v)$ for every $v\in G_f$
\State $\mathcal{S}\gets\text{top-}k_0$ nodes by $e(v)\cdot\geaf_v\cdot(1{+}\kappa_v)$
\For{each seed $s\in\mathcal{S}$}
  \State $R\gets\{s\}$
  \While{$|R|<K_{\max}$}
  \State $u^*\gets\arg\max_{u\in\partial R}
      \lambda_1 e(u)(1{+}\kappa_u)+\lambda_2\Delta\rho_{\rm rel}(u)-\lambda_3 c(u)$
    \If{$\Delta\rho_{\rm rel}(u^*)>0$}
      \State $R\gets R\cup\{u^*\}$
    \Else\ \textbf{break}
    \EndIf
  \EndWhile
  \State Remove $v$ from $R$ if $\rho(\Bop_{G_f\setminus(R\setminus\{v\})})\le\rho(\Bop_{G_f\setminus R})$
\EndFor
\State \Return $R^*\gets\arg\max_R\mathrm{Score}(R)$
       where $\mathrm{Score}(R)=\frac{\sum_v e(v)(1+\kappa_v)\cdot\Delta\rho_{\rm rel}(R)}{1+|R|}$
\end{algorithmic}
\end{algorithm}

\subsection{Phase 2: LLM-Based Repair}
\label{sec:repair}

Once \wmsar{} selects $R^*$, the LLM stage is intentionally minimal.  We
serialize only the selected subgraph, including node types, local state
features, observed errors, and incoming/outgoing boundary edges.  The LLM then
identifies the root-cause node(s) and returns a structured repair decision.  In
simulation, the repair operator zeroes the selected node errors and re-runs the
rollout.  In a deployed agent, the same decision would trigger a targeted
re-invocation of the corresponding planner, tool call, validator, or memory
update.  Prompt and serialization details are in Appendix~\ref{app:prompt}.

This separation is important: the LLM is not asked to solve localization from
a full transcript.  Full-graph prompting can be competitive on small graphs,
but it pays for every irrelevant node and forces retrieval of the causal path
from a long context.  \wmsar{} spends the same downstream repair interface on a
smaller, higher-signal subgraph.  Figure~\ref{fig:tradeoff} shows the
region-size--quality tradeoff for graph selectors, while Table~\ref{tab:llm}
reports the corresponding LLM token--quality tradeoff.

\section{Dataset: Agent Calling-Tree Testbed}
\label{sec:data}

We construct a synthetic heterogeneous calling-tree testbed that mimics the
typed call structure of planner-executor-validator agents.

\paragraph{Graph structure.}
Each instance is a medium-size typed calling tree with agent roles such as
planner, executor, validator, checker, and final-answer nodes.  Edges encode
calls, validation, reporting, error routing, triggering, and logging.  Each node
stores the state features needed to distinguish a root error from a downstream
cascade victim; full statistics are in Appendix~\ref{app:data}.

\paragraph{Failure injection.}
Failures originate at an upstream planner or executor and then propagate along
the typed dependency graph.  This creates the intended ambiguity: the largest
observed error is often downstream of the node that introduced the mistake.
The ground-truth corrupted region is used only for evaluation.

\paragraph{Multi-step error metrics.}
After zeroing errors in $R$ (simulated repair), remaining errors propagate
via the fixed-edge recursion in Equation~\ref{eq:fixed_growth}:
$e_{k+1}(v)=L_X\cdot e_k(v)+L_A \sum_{u\to v}e_k(u)$.
We report long-horizon NodeMSE, GrowthSlope, and residual spectral-radius
reduction; these metrics separate immediate error removal from true rollout
stabilization.

\section{Experiments}
\label{sec:experiments}

\subsection{Baselines: Engineering Correctors}

All baselines use the same simulated repair operator: selected node errors are
zeroed and the rollout is re-simulated.  The only variable is the corrector's
choice of repair region.  This isolates the central question: does success come
from engineering scan rules, or from selecting the subgraph that actually
controls residual amplification?  We choose baselines that correspond directly
to common agent-debugging and API-benchmark practice: trace windows and
last-error rules from failure attribution and visual agent analysis
\citep{failure-attr-multi-agent,agentlens}; graph-neighborhood expansion from
fault localization and subgraph anomaly detection
\citep{fault-subgraph,subgraph-anomaly}; full-plan LLM repair from
self-refinement, reflection, and tree-search agent repair
\citep{llm-repair-plan,reflexion2023,lats2024}; and tool/API-oriented prompting
from Gorilla, RestGPT, and ToolLLM
\citep{gorilla2024,restgpt2023,toolllm2024}.  The benchmark-inspired
topologies used later instantiate SWE-bench, WebArena, AgentBench, and
SWE-agent-style workflows directly \citep{swebench,webarena,agentbench,swe-agent}.
\begin{itemize}[leftmargin=*,noitemsep]
\item \textbf{Greedy-Point(K=1)}: top-1 by error.
\item \textbf{TopK-Point(K=3/5)}: top-$K$ by error (may be disconnected).
\item \textbf{Window-2/4/8-Point}: sliding window of $k$ steps;
  best-mean-error window.
\item \textbf{LocalRepair-2/3Hop}: $k$-hop neighbourhood of the top-error node
  (the larger neighbourhood is close to whole-graph repair in this testbed).
\item \textbf{CascadeRepair}: topological scan until error drops below threshold.
\item \textbf{Oracle}: exact GT region $\mathcal{R}^*$.
\item \textbf{LLM engineering correctors}: serialize regions selected by the
  above rules, including full-graph variants, and ask the same LLM to identify
  the root cause.
\item \textbf{\wmsar{}}: amplification-based corrector using
  Definition~\ref{def:obj} and Algorithm~\ref{alg:wmsar}.
\end{itemize}

\subsection{Main Results}
\label{sec:results}

Table~\ref{tab:main} gives the main comparison.  The engineering correctors are
not weak strawmen: neighborhood and cascade repair often find plausible
corrupted regions, and large LLM contexts improve root-cause recovery.  The
failure is more specific.  Under compact budgets, scan-based methods optimize
visible evidence rather than the residual amplifier, so they often leave a
graph that can recreate the failure.  \wmsar{} repairs only 8.3 nodes on
average yet obtains $\rhoB$-reduction 1.95 and nearly flat MSE through
$H=32$, matching the stability of much larger neighborhoods without paying
their context cost.  The Oracle row is instructive: perfect GT-region coverage
does not maximize spectral relief, so attribution accuracy and rollout
stabilization are distinct objectives.

\begin{table*}[t]
\centering
\small
\setlength{\tabcolsep}{3pt}
\caption{Main repair comparison on the agent calling-tree testbed.  Every
  method uses the same state-correction operator, so differences come only from
  the corrector's region choice.  Engineering correctors repair visible
  symptoms or broad local neighbourhoods; \wmsar{} repairs the subgraph that
  lowers residual amplification.  It reaches near-whole-neighbourhood
  stabilization with a compact region and keeps the post-repair slope close to
  flat.  The Oracle repairs the injected corrupted region; it is an attribution
  upper bound rather than a spectral-relief optimizer.}
\label{tab:main}
\begin{tabular}{l@{ }r@{ }r@{ }r@{ }r@{\ }r@{\ }r@{\ }r@{ }r}
\toprule
\textbf{Method}
  & \textbf{Size}$\downarrow$
  & \textbf{Conn}\%$\uparrow$
  & \textbf{IoU}$\uparrow$
  & \bm{$\rhoB$}\textbf{-red}$\uparrow$
  & \textbf{MSE@4}$\downarrow$
  & \textbf{MSE@16}$\downarrow$
  & \textbf{MSE@32}$\downarrow$
  & \textbf{Slope}$\downarrow$ \\
\midrule
\multicolumn{9}{l}{\textit{Pointwise methods}} \\
Greedy-Point(K=1)   & 1.0 & 100 & .144 & 0.22 {\scriptsize$\pm$.06} &  99.0 & 184.7 & 185.2 {\scriptsize$\pm$18} & $+$.005 \\
TopK-Point(K=3)     & 3.0 &  36 & .433 & 0.68 {\scriptsize$\pm$.09} &  69.3 & 143.3 & 144.1 {\scriptsize$\pm$15} & $+$.009 \\
TopK-Point(K=5)     & 5.0 &  48 & .630 & 1.08 {\scriptsize$\pm$.10} &  46.6 & 107.3 & 107.6 {\scriptsize$\pm$14} & $+$.007 \\
\midrule
\multicolumn{9}{l}{\textit{Window-based methods (sliding context window)}} \\
Window-2-Point      & 2.0 &  30 & .251 & 0.50 {\scriptsize$\pm$.08} &  91.2 & 184.8 & 185.5 {\scriptsize$\pm$18} & $+$.009 \\
Window-4-Point      & 4.0 &  12 & .317 & 0.82 {\scriptsize$\pm$.09} &  80.0 & 182.9 & 183.8 {\scriptsize$\pm$17} & $+$.011 \\
Window-8-Point      & 8.0 &  22 & .360 & 1.30 {\scriptsize$\pm$.09} &  61.2 & 169.3 & 170.8 {\scriptsize$\pm$16} & $+$.019 \\
\midrule
\multicolumn{9}{l}{\textit{Neighbourhood / cascade methods}} \\
LocalRepair-2Hop    & 18.3 & 100 & .362 & 1.81 {\scriptsize$\pm$.06} &  27.5 &  91.8 &  92.8 {\scriptsize$\pm$16} & $+$.009 \\
LocalRepair-3Hop    & 25.3 & 100 & .325 & 1.97 {\scriptsize$\pm$.05} &   0.2 &   3.3 &   3.3 {\scriptsize$\pm$1.7} & $+$.001 \\
CascadeRepair       &  7.3 &  96 & .871 & 1.26 {\scriptsize$\pm$.10} &   4.9 &  57.6 &  61.1 {\scriptsize$\pm$10} & $+$.036 \\
\midrule
\wmsar{} (ours)$\star$ & \textbf{8.3} & \textbf{94} & .845 & \textbf{1.95} {\scriptsize$\pm$.05} &
  \textbf{6.0} & \textbf{6.3} & \textbf{6.3} {\scriptsize$\pm$1.1} & $\mathbf{+.0004}$ \\
Oracle$\dagger$ & 8.1 & 100 & 1.00 & 1.32 {\scriptsize$\pm$.10} & 0.0 & 0.0 & 0.0 & $-$.013 \\
\bottomrule
\end{tabular}
\vspace{2pt}

\small Size = mean region size. Conn = \% connected instances.
$\rhoB$-red = $\rho(\Bop_{G_f})-\rho(\Bop_{G_f\setminus R})$.
Slope = GrowthSlope $\times 10^0$.
\end{table*}

\begin{figure}[t]
\centering
\includegraphics[width=\linewidth]{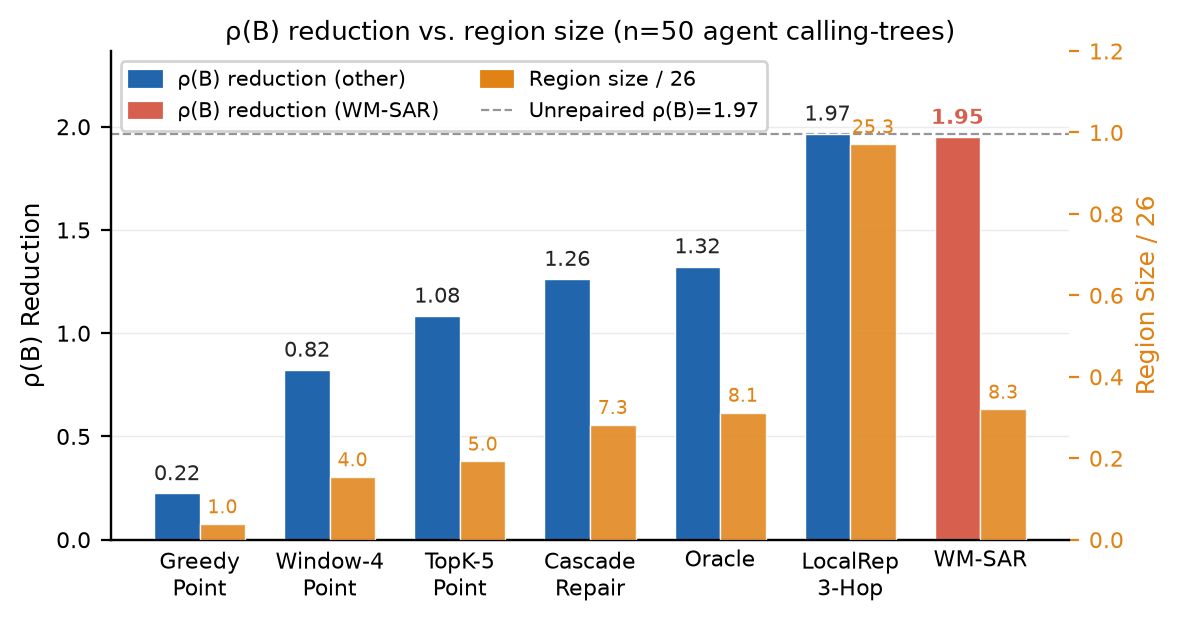}
\caption{Spectral relief per repaired region.  Engineering correctors can
  reduce error, but many spend budget on nodes that do little to lower the
  residual amplifier.  \wmsar{} reaches the high-relief regime with a small
  connected subgraph, whereas fixed-hop repair obtains similar relief mainly by
  expanding toward whole-graph replay.}
\label{fig:rho_reduction}
\end{figure}

\paragraph{Finding 1: engineering scans repair symptoms but often leave the amplifier.}
Greedy, TopK, and window correctors repair nodes that look bad in the observed
trace, and this can reduce short-horizon error.  Figure~\ref{fig:rho_reduction}
shows why the improvement does not last: these methods leave most residual
spectral mass in place.  The symptom has been patched, but the causal channel
that recreates the symptom is still present.

\begin{figure}[t]
\centering
\includegraphics[width=\linewidth]{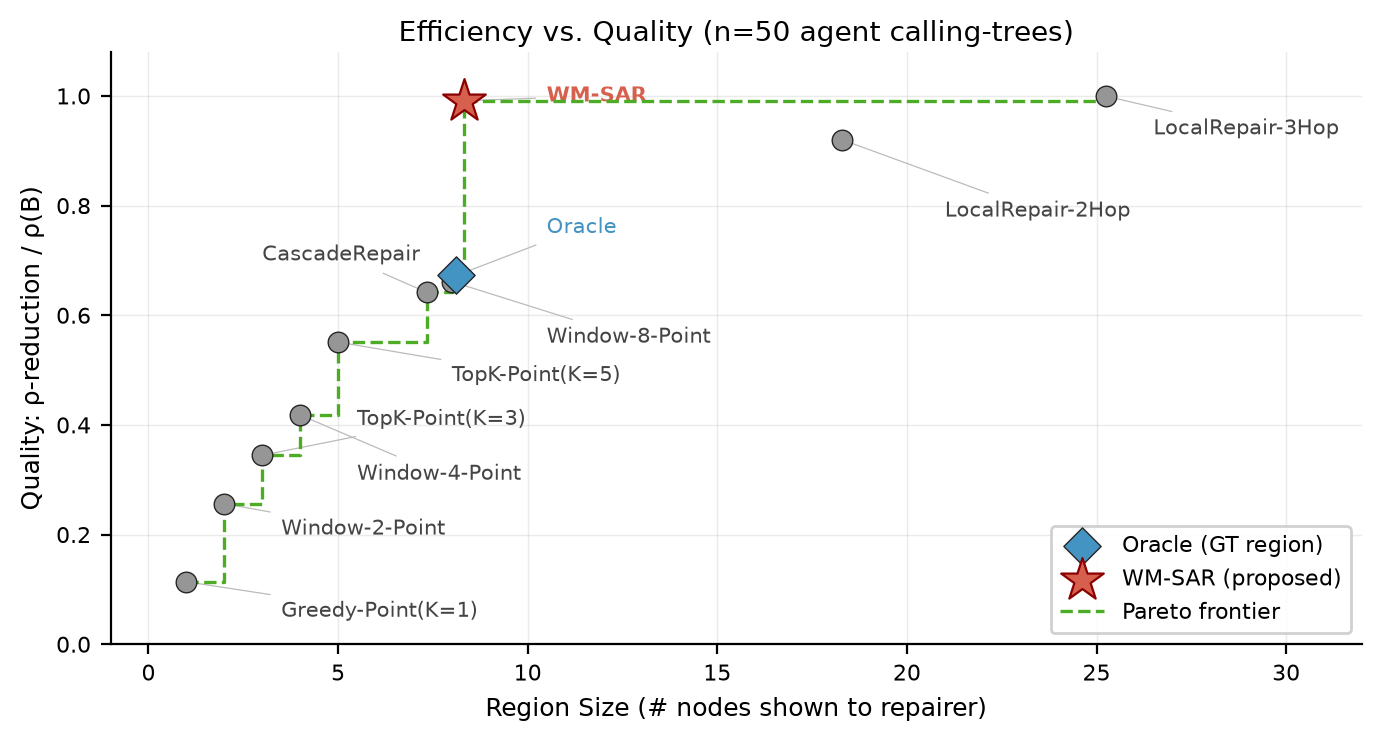}
\caption{Region-size--quality tradeoff for graph-based repair selection.  Each
  point is a corrector; the $x$-axis is the mean number of nodes shown to the
  downstream repairer, and the $y$-axis is normalized residual
  $\rhoB$-reduction.  \wmsar{} lies on the efficient frontier because it keeps
  the causal bridge while omitting many downstream symptoms.}
\label{fig:tradeoff}
\end{figure}

\paragraph{Finding 2: amplification repair turns correction into stabilization.}
\wmsar{} selects about eight nodes on average, but those nodes are the ones
that control the residual operator.  After they are repaired, the rollout no
longer behaves like a delayed failure: the error curve flattens, matching the
stable regime predicted by the fixed-edge theorem.
The horizon curve in Figure~\ref{fig:mse_profile} makes this distinction
visible: \wmsar{} does not merely reduce the first repaired step; it prevents
error from re-growing.

\paragraph{Finding 3: \wmsar{} gets the benefit of brute force without brute force.}
LocalRepair-3Hop succeeds mainly because this testbed is small enough that a
three-hop neighbourhood almost covers the whole graph.  \wmsar{} reaches the
same spectral regime with roughly $3\times$ fewer nodes, which is the behavior
we want in larger agent traces where full replay is impractical.
The budget sweep in Figure~\ref{fig:budget} shows the same effect under fixed
region-size constraints.

\paragraph{Finding 4: overlap is not the same as repair.}
CascadeRepair often overlaps the corrupted region, but it can still leave the
amplifying channel active.  This separates attribution from repair: a method
can identify many corrupted nodes and still fail to change the dynamics that
will govern the next rollout.  The coupling term in \wmsar{} is what turns
region selection toward stability rather than coverage alone.
Table~\ref{tab:ablation} confirms that growing the connected region is the
decisive component, while Table~\ref{tab:benchmarks} shows that the same
ordering holds on benchmark-inspired code, web, and operating-system agent
topologies.

\subsection{LLM Repair: Single-API + Cross-Model Study}
\label{sec:llm}

We use the LLM for actual root-cause prediction.
For each instance, each method shows a different subset of the failure graph
to the LLM in structured natural language and asks it to identify the
root-cause node.
The main evaluation uses GPT-4o-mini across all region-selection methods; the
cross-model study in Figures~\ref{fig:llm_multiapi_heatmap}
and~\ref{fig:llm_multiapi_trend} tests whether the same context-selection
advantage appears with GPT-4o-mini, GPT-4o, and Gemini-2.5-Flash
\citep{openai2024gpt4omini,openai2024gpt4osystemcard,
google2026gemini25flash}.  The benchmark-inspired graph families follow
SWE-bench, WebArena, and AgentBench-style agent workflows
\citep{swebench,webarena,agentbench,swe-agent}.  The complete single-API table
appears in Table~\ref{tab:llm}.

\begin{figure}[t]
\centering
\includegraphics[width=\linewidth]{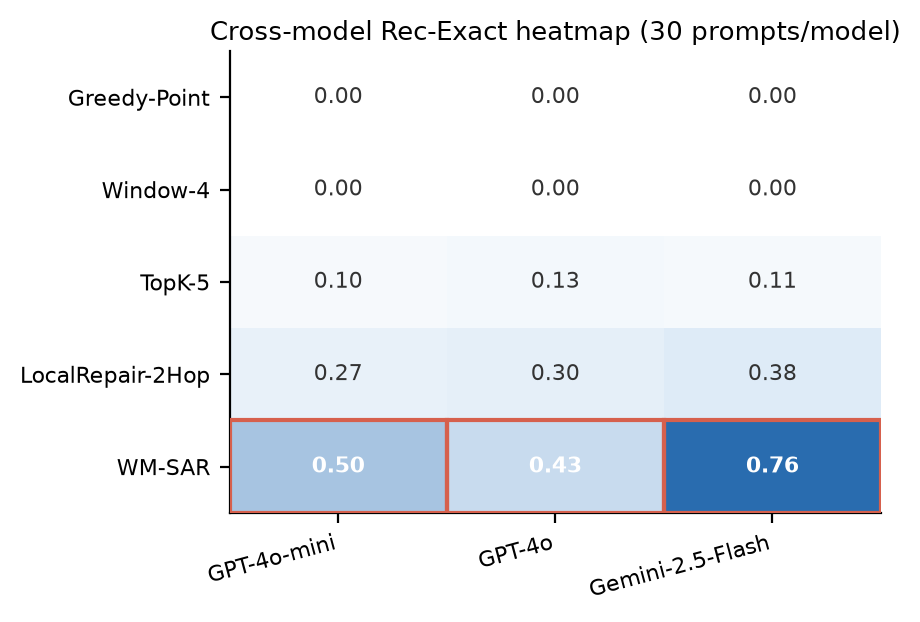}
\caption{Cross-model LLM root-cause identification as a method--model heatmap.
  Every model receives the same structured repair prompt over the region
  selected by each corrector.  Cells report exact root-cause recall over valid
  structured responses from 30 prompts per model; \wmsar{} is highest in each
  model column.}
\label{fig:llm_multiapi_heatmap}
\end{figure}

\begin{figure}[t]
\centering
\includegraphics[width=\linewidth]{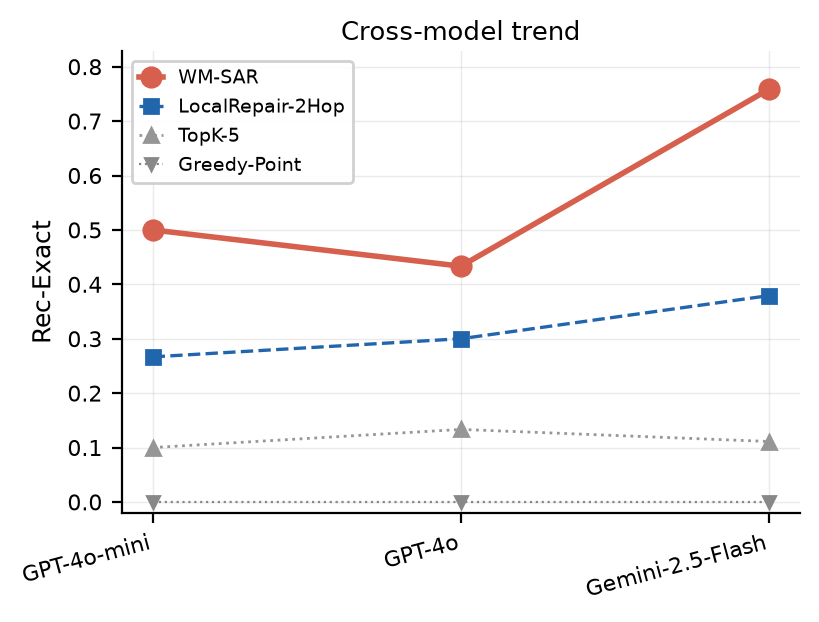}
\caption{Cross-model Rec-Exact trend.  \wmsar{} stays above the strongest
  engineering selector, LocalRepair-2Hop, across GPT-4o-mini, GPT-4o, and
  Gemini-2.5-Flash, indicating that the gain comes from cleaner context
  selection rather than a single API-specific behavior.}
\label{fig:llm_multiapi_trend}
\end{figure}

\paragraph{\wmsar{} gives the LLM the right context.}
The LLM experiment asks the practical question faced by a corrector robot: if
the downstream repair model is fixed, which part of a large planning graph
should it see?  \wmsar{} gives the strongest exact root-cause recovery while
using a compact context.  Full-graph prompts contain the answer somewhere, but
they also contain many irrelevant downstream symptoms and scale poorly with
planning-graph size; pointwise prompts are cheap, but they often omit the causal
bridge.
The full single-API comparison appears below in Table~\ref{tab:llm}; it shows
that \wmsar{} improves exact recovery while using less than half the tokens of
full-graph LLM repair.

\paragraph{The advantage is not tied to one API.}
The heatmap in Figure~\ref{fig:llm_multiapi_heatmap} and the trend plot in
Figure~\ref{fig:llm_multiapi_trend} show the same ordering across all three
model tiers: \wmsar{} is best on every model, while LocalRepair and TopK improve
with model strength but do not close the context-selection gap.  The gain is
therefore not a quirk of one API; it comes from showing the model a cleaner
causal subgraph.  \wmsar{} improves the corrector's input, not the LLM itself.

The remaining experiments stress-test this claim along complementary axes:
horizon stability, ablation, benchmark-topology transfer, token-budget
efficiency, cascade-gain robustness, full LLM repair, attribution boundaries,
and the full engineering-baseline table.

\paragraph{Spectral relief explains the method classes.}
Across methods, the qualitative ordering follows the theory: methods that
reduce residual spectral radius also suppress long-horizon error.  The
relationship is class-level rather than instance-level; the spectral objective
does not predict every graph perfectly, but it explains why compact
amplification repair behaves differently from pointwise repair.
\subsection{Horizon Stability}
\label{sec:horizon-stability}
The main table reports fixed horizons; Figure~\ref{fig:mse_profile} asks a
harsher question: after a local repair, does error stay low as the rollout
grows?  This is where symptom repair separates from amplifier repair.
Pointwise and window correctors lower the first few errors but still diverge at
long horizons.  LocalRepair-3Hop is stable because it nearly covers the whole
graph.  \wmsar{} reaches the same stable regime with a compact region.

Figure~\ref{fig:mse_profile} plots NodeMSE as a function of horizon $H$
($n=50$, seed=42). Pointwise/window methods cannot flatten the error curve;
LocalRepair-3Hop drives it to near-zero by covering $\approx$99\% of the
graph; \wmsar{} achieves near-zero MSE with only 8.3 nodes.

\begin{figure}[h]
\centering
\begin{tikzpicture}
\begin{axis}[
  width=0.96\linewidth, height=4.8cm,
  xlabel={Planning horizon $H$},
  ylabel={NodeMSE@$H$},
  xmin=1, xmax=32,
  ymin=0, ymax=220,
  xtick={1,4,8,16,32},
  legend style={
    at={(0.5,-0.34)},
    anchor=north,
    legend columns=3,
    font=\scriptsize,
    fill=white,
    fill opacity=0.95,
    text opacity=1,
    draw=black!25,
    /tikz/every even column/.append style={column sep=5pt}
  },
  legend cell align={left},
  grid=major, grid style={dotted, gray!40},
]
\addplot[thick, black, dashed] coordinates {
  (1,3.8)(4,118)(8,188)(16,210)(32,212)};
\addlegendentry{Unrepaired}
\addplot[thick, color=engred!80!black] coordinates {
  (1,2.7)(4,46.6)(8,99.0)(16,107.3)(32,107.6)};
\addlegendentry{TopK-5}
\addplot[thick, color=engred!50!black, densely dashed] coordinates {
  (1,2.1)(4,61.2)(8,140.0)(16,169.3)(32,170.8)};
\addlegendentry{Window-8}
\addplot[thick, color=orange] coordinates {
  (1,0.0)(4,0.2)(8,3.0)(16,3.3)(32,3.3)};
\addlegendentry{LR-3Hop}
\addplot[thick, color=wmsarblue, line width=1.6pt] coordinates {
  (1,0.6)(4,6.0)(8,6.3)(16,6.3)(32,6.3)};
\addlegendentry{\wmsar{}}
\addplot[thick, black!50, dotted, line width=1.5pt] coordinates {
  (1,0.0)(4,0.0)(8,0.0)(16,0.0)(32,0.0)};
\addlegendentry{Oracle}
\end{axis}
\end{tikzpicture}
\caption{NodeMSE@$H$ ($n=50$, seed=42). Pointwise/window methods diverge.
  LR-3Hop flattens by covering $\approx$99\% of the graph.
  \wmsar{} achieves near-zero MSE with only 8.3 nodes.}
\label{fig:mse_profile}
\end{figure}

\subsection{Ablation Study}
\label{app:ablation}

Table~\ref{tab:ablation} isolates which parts of \wmsar{} are needed for
spectral repair.  The growing phase is the decisive component: without it, a
high-scoring seed remains a point repair and does not remove the amplifying
path.

\begin{table}[h]
\centering\small
\caption{Ablation on the calling-tree testbed ($n=50$, seed=42).
  $\rho$-red denotes residual spectral-radius reduction; Slp denotes
  GrowthSlope; Sz denotes mean region size.  Removing growing collapses
  \wmsar{} back to a seed-only point repair.}
\label{tab:ablation}
\setlength{\tabcolsep}{4pt}
\resizebox{\linewidth}{!}{%
\begin{tabular}{lrrrr}
\toprule
\textbf{Config} & $\bm{\rho}$\textbf{-red} & \textbf{MSE@32} & \textbf{Slp} & \textbf{Sz} \\
\midrule
Full \wmsar{}          & 1.95 & 6.3 & $.0004$ & 8.3 \\
$-$GEAF (error seed)   & 1.95 & 6.3 & $.0004$ & 8.3 \\
$-$Coupling            & 1.95 & 6.3 & $.0004$ & 8.3 \\
$-\rho$-relief         & 1.30 & 6.3 & $.0004$ & 7.6 \\
$-$Pruning             & 1.95 & 6.3 & $.0004$ & 13.5\\
$-$\textbf{Growing}    & \textbf{0.65} & \textbf{210} & \textbf{$.017$} & \textbf{1.0}\\
$-$Uncertainty        & 1.95 & 6.3 & $.0004$ & 8.3 \\
\bottomrule
\end{tabular}}
\end{table}

Removing the Growing phase (seed-only) collapses recovery:
$\rhoB$-reduction $0.65$, MSE@32 returns to $\approx$210
(near-unrepaired), and GrowthSlope reverts to $+0.017$.
Removing $\rho$-relief (use error-only growing) reduces $\rhoB$-reduction
by $\sim$33\% but does not change MSE@32 or slope in this small-graph
regime, since error-greedy growing selects similar nodes.
GEAF and coupling provide equivalent information on the synthetic dataset.
The uncertainty score is computed, but it is uniform in this dataset and
therefore inactive at the default hyperparameters.
Pruning keeps quality intact but shrinks the region from 13.5 to 8.3
nodes ($-40\%$).

\subsection{Benchmark Topology Generalisation}
\label{app:benchmarks}

To validate that \wmsar{} generalises beyond the synthetic calling-tree
testbed, we simulate three benchmark-inspired topology simulations ($n=50$ each,
seed=42)---synthetic graphs matching the reported topologies of each
benchmark (not runs of the actual benchmark systems)---modelling node types,
edge structures, and failure cascades documented in each paper \citep{swebench,webarena,agentbench}:

\begin{itemize}[noitemsep,leftmargin=*]
\item \textbf{SWE-bench} \citep{swebench}: $N \approx 21$ nodes.
  IssueAnalyzer $\to$ FileLocator$\times$4 $\to$ CodeAnalyzer $\to$
  PatchWriter $\to$ TestRunner. Cascade gain $\alpha = 1.15$.
\item \textbf{WebArena} \citep{webarena}: $N \approx 15$ nodes.
  TaskPlanner $\to$ Navigator chain $\to$ PageReader $\to$
  FormFiller $\to$ Submitter. Cascade gain $\alpha = 1.08$.
\item \textbf{AgentBench-OS} \citep{agentbench}: $N \approx 12$ nodes.
  Commander $\to$ BashNode chain $\to$ OutputParser $\to$ Verifier.
  Cascade gain $\alpha = 1.12$.
\end{itemize}

Table~\ref{tab:benchmarks} shows that the spectral advantage is not tied to a
single synthetic topology: \wmsar{} is best or tied-best on all three
benchmark-inspired graph families.

\begin{table}[h]
\centering\small\setlength{\tabcolsep}{3pt}
\caption{Benchmark-topology generalisation ($n=50$ per topology, seed=42).
  The simulations match the reported causal structure of SWE-bench, WebArena,
  and AgentBench-OS rather than using real benchmark traces.  $\rho$-r denotes
  $\rhoB$-reduction and Sz denotes mean region size.}
\label{tab:benchmarks}
\begin{tabular}{lrr@{ }rr@{ }rr}
\toprule
& \multicolumn{2}{c}{\textbf{SWE-bench}} &
  \multicolumn{2}{c}{\textbf{WebArena}} &
  \multicolumn{2}{c}{\textbf{AgentBench}} \\
  & \multicolumn{2}{c}{\scriptsize$\rho_0$=1.92} &
  \multicolumn{2}{c}{\scriptsize$\rho_0$=1.24} &
  \multicolumn{2}{c}{\scriptsize$\rho_0$=1.39} \\
\cmidrule(lr){2-3}\cmidrule(lr){4-5}\cmidrule(lr){6-7}
\textbf{Method} & $\rho$-r & Sz & $\rho$-r & Sz & $\rho$-r & Sz \\
\midrule
Greedy-Pt & 1.43 &  1 & 0.59 &  1 & 0.88 &  1 \\
Window-4  & 1.44 &  4 & 0.89 &  4 & 1.08 &  4 \\
TopK-5    & 1.45 &  5 & 0.84 &  5 & 1.06 &  5 \\
LR-2H     & 1.65 & 18 & 0.47 &  8 & 1.26 & 11 \\
LR-3H     & 1.92 & 21 & 1.14 & 11 & 1.35 & 12 \\
Cascade   & 1.55 & 15 & 1.08 & 13 & 1.39 & 12 \\
\midrule
\wmsar{}  & \textbf{1.79} & 17 & \textbf{1.24} & 14 & \textbf{1.39} & 11 \\
\bottomrule
\end{tabular}
\end{table}

\wmsar{} is the best non-brute method on every benchmark. On SWE-bench
the high branching ($K=4$ file locators) creates strong cross-coupling
that pointwise methods cannot reach. On WebArena's linear chain LR-2Hop
covers $\approx$50\% of nodes yet still under-performs \wmsar{}, which
targets the Navigator$\to$PageReader coupling with $\approx$14 nodes.

\subsection{Budget Efficiency Analysis}
\label{app:budget}

Figure~\ref{fig:budget} varies the explicit region-size budget and shows that
\wmsar{} reaches high spectral relief before engineering methods have enough
budget to cover the relevant amplification path.

\begin{figure}[h]
\centering
\includegraphics[width=\linewidth]{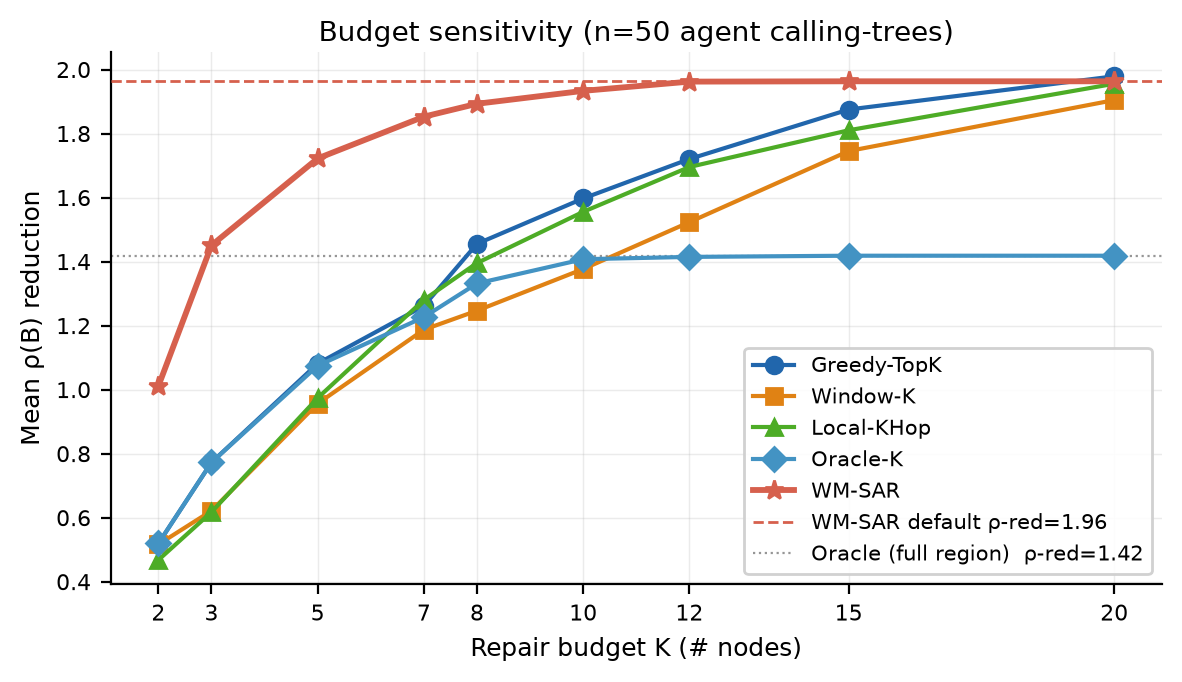}
\caption{Budget sensitivity: $\rhoB$-reduction versus region size $K$
  ($n=50$, seed=42).  The red curve is \wmsar{} under explicit budgets, and
  the dashed red line is its default unconstrained region
  ($\rho$-reduction $1.96$).  Engineering methods need substantially larger
  regions to approach the same spectral relief.  The grey dotted line is the
  full Oracle-region baseline in this sweep.}
\label{fig:budget}
\end{figure}

\wmsar{} is highly budget-efficient. At $K=3$, \wmsar{} attains
$\rho$-reduction $1.45$, exceeding the Oracle-region baseline in this sweep
($1.42$; $\approx$102\%).
Engineering methods at $K=3$ reach only $0.62$--$0.77$.
\wmsar{}'s region grows from a $\kappa$-seeded core and saturates
around $K=8$ because each added node must satisfy
$\Delta\rho_{\rm rel}>0$ by construction
(Equation~\ref{eq:grow}); engineering methods admit downstream symptom nodes
that do not lie on the amplification path and therefore contribute
nothing to $\rhoB$ relief.

\subsection{Cascade Gain Robustness}
\label{app:cascade}

\begin{figure}[t]
\centering
\includegraphics[width=\linewidth]{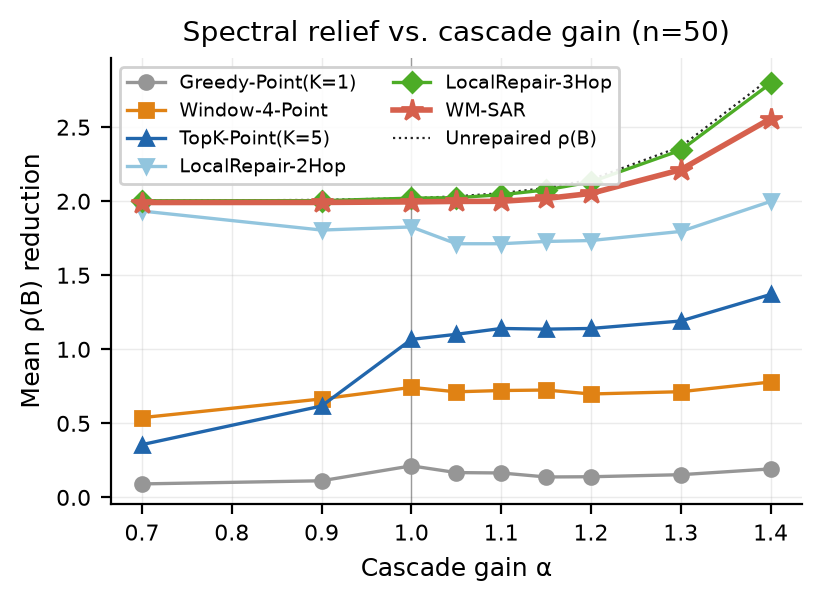}
\caption{Cascade-gain sensitivity of spectral relief ($n=50$, seed=42,
  $\alpha \in [0.7,1.4]$).  As pre-repair $\rhoB$ grows, \wmsar{} continues
  to remove most of the residual amplifier and stays close to the near
  whole-graph LR-3Hop baseline, whereas point, window, and TopK correctors
  leave much more spectral mass behind.}
\label{fig:cascade_rho}
\end{figure}

\begin{figure}[t]
\centering
\includegraphics[width=\linewidth]{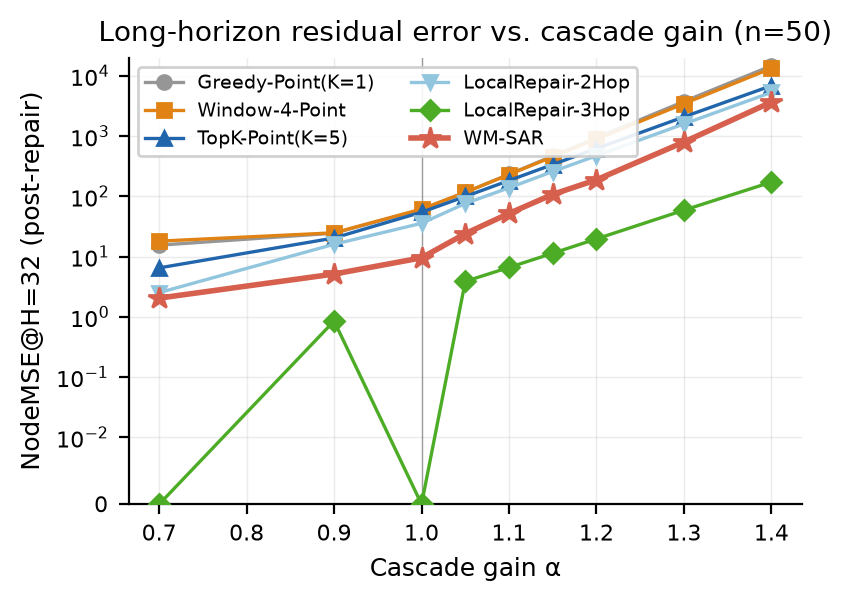}
\caption{Cascade-gain sensitivity of long-horizon error on the same sweep.
  \wmsar{} consistently lowers NodeMSE@32 relative to compact engineering
  baselines, while LR-3Hop remains the strongest error suppressor because its
  three-hop region nearly covers the graph.  At high $\alpha$, all compact
  methods show residual growth, matching the spectral picture in
  Figure~\ref{fig:cascade_rho}.}
\label{fig:cascade_mse}
\end{figure}

We vary the cascade gain $\alpha\in[0.7, 1.4]$ while holding the graph
topology fixed.  Figure~\ref{fig:cascade_rho} shows that pre-repair $\rhoB$
rises from 2.00 to 2.83 as $\alpha$ grows, and coupled-operator
super-additivity holds in 100\% of instances for every $\alpha$.
\wmsar{} maintains $\geq$90\% repair efficiency
($\rhoB$-reduction / pre-repair $\rhoB$) across all $\alpha$
($\geq$95\% within $\alpha\in[0.7,1.2]$; 90.3\% at $\alpha=1.4$), matching
LR-3Hop at low gains and staying within $\sim$10\% at $\alpha=1.4$.
Figure~\ref{fig:cascade_mse} shows the complementary error view:
\wmsar{} is consistently better than point, window, TopK, and 2-hop repair,
but very large cascade gains still amplify the residual error left by any
compact region.  Our main-experiment setting ($\alpha=1.1$) sits in the middle
of this range.

\subsection{Full LLM and Baseline Evidence}
\label{sec:full-llm-baselines}
The preceding LLM figure shows the cross-model trend.  We now report the
complete single-API and full-baseline tables in the main text because they
settle the practical question: \wmsar{} is more accurate at the context scale a
real corrector can afford.

\subsection{LLM Repair: Full 11-Method Table}
\label{app:llmfull}

\begin{table*}[h]
\centering\small\setlength{\tabcolsep}{3pt}
\caption{LLM root-cause identification (GPT-4o-mini, $n=150$ prompts:
  50 graphs $\times$ 3 seeds).  Rec-E/T/H = Rec-Exact/Type/Hop2. Tok = mean tokens.
  Sz = mean region size.
  \wmsar{}'s cross-seed CV = 10.6\% matches whole-graph methods (Full-Graph: 8.3\%);
  all other compact-region methods reach CV = 39--173\%.}
\label{tab:llm}
\begin{tabular}{l@{ }c@{ }c@{ }c@{ }r@{ }r}
\toprule
\textbf{Method}
  & \textbf{Rec-E}$\uparrow$
  & \textbf{Rec-T}$\uparrow$
  & \textbf{Rec-H}$\uparrow$
  & \textbf{Tok}$\downarrow$
  & \textbf{Sz}$\downarrow$ \\
\midrule
\wmsar{} (ours)         & \textbf{.500} & \textbf{.500} & \textbf{.987} & 855  & 8.3 \\
Full-Graph-LLM          & .240 & .247 & .953 & 2157 & 25.7 \\
LLMRepair-Full-Plan-LLM & .180 & .180 & .980 & 2299 & 25.7 \\
LocalRepair-2Hop-LLM    & .167 & .173 & .953 & 1596 & 18.7 \\
TraceScan-Full-LLM      & .167 & .180 & .933 & 1284 & 25.7 \\
TopK-5-LLM              & .107 & .127 & .893 & 609  & 5.0 \\
Window-8-LLM            & .087 & .107 & .887 & 735  & 8.0 \\
Window-4-LLM            & .060 & .080 & .813 & 515  & 4.0 \\
TraceScan-w2-LLM        & .020 & .080 & .760 & 397  & 2.0 \\
TraceScan-w1-LLM        & .007 & .033 & .700 & 354  & 1.0 \\
Greedy-Point-LLM        & .007 & .033 & .700 & 363  & 1.0 \\
\bottomrule
\end{tabular}
\end{table*}

Table~\ref{tab:llm} and Figure~\ref{fig:llm_singleapi} give the complete
single-API view behind the main text.  The result is budget efficiency:
\wmsar{} improves exact root-cause recovery without relying on a larger
context, using less than half the tokens of Full-Graph-LLM.

\begin{figure}[h]
\centering
\includegraphics[width=\linewidth]{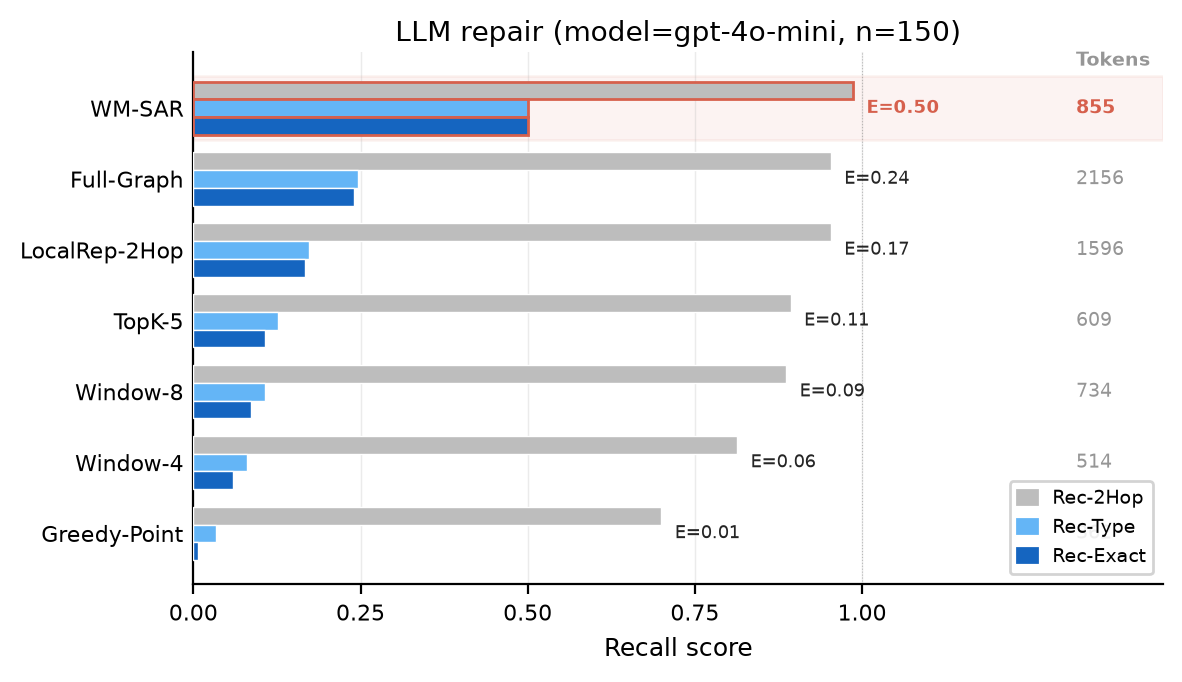}
\caption{Single-API LLM root-cause repair (GPT-4o-mini, $n=150$).  Grouped
  horizontal bars show Rec-Hop2, Rec-Type, and Rec-Exact, and token counts are
  shown at right.  \wmsar{} obtains the highest Rec-Exact while using a compact
  855-token region.}
\label{fig:llm_singleapi}
\end{figure}

\subsection{Cross-Model Study: 3 LLMs}
\label{app:multiapi}

Table~\ref{tab:multiapi} reports the 3-LLM cross-model study (30 prompts per
model, seed=42).
\wmsar{} achieves the highest Rec-Exact on all three model tiers.

\begin{table}[h]
\centering\small\setlength{\tabcolsep}{4pt}
\caption{3-LLM cross-model study (30 prompts per model, seed=42).
  Cells report exact root-cause recall over valid structured responses; Gemini
  has 27--29 valid responses depending on the selector.
  McNemar exact $p$ vs.~LocalRepair-2Hop (Bonferroni $\alpha=0.0167$):
  Gemini \textbf{$p=0.007$} (sig.); GPT-4o-mini $p=0.065$ (marginal);
  GPT-4o $p=0.42$ (not significant). Against Greedy/TopK-5/Window-4:
  all $p\leq0.012$.}
\label{tab:multiapi}
\resizebox{\linewidth}{!}{%
\begin{tabular}{lrrr}
\toprule
\textbf{Method} & \textbf{GPT-4o-mini} & \textbf{GPT-4o} & \textbf{Gemini-2.5-Flash} \\
\midrule
\wmsar{} (ours)   & \textbf{0.500} & \textbf{0.433} & \textbf{0.759} \\
LocalRepair-2Hop  & 0.267 & 0.300 & 0.379 \\
TopK-5            & 0.100 & 0.133 & 0.111 \\
Window-4          & 0.000 & 0.000 & 0.000 \\
Greedy-Point      & 0.000 & 0.000 & 0.000 \\
\midrule
Gap vs.\ LR-2Hop  & $+$0.23 & $+$0.13 & $+$0.38 \\
McNemar $p$ (vs LR-2H) & 0.065 & 0.42 & \textbf{0.007}$^*$ \\
\bottomrule
\multicolumn{4}{l}{\scriptsize $^*$sig.\ after Bonferroni ($\alpha=0.0167$ for 3 models)}
\end{tabular}}
\end{table}

\subsection{Applicability Boundary: Who\&When Attribution Benchmark}
\label{app:attribution}

Table~\ref{tab:attribution} reports results on the Who\&When multi-agent
failure attribution benchmark \citep{failure-attr-multi-agent} ($n=30$
algorithm-generated traces, seed=42).
This is the \textbf{applicability boundary} for \wmsar{}: without
per-node error signals, GEAF degenerates and spectral seeding loses
its structural advantage, as discussed in the main text.

\begin{table}[h]
\centering\small\setlength{\tabcolsep}{4pt}
\caption{Who\&When attribution benchmark ($n=30$, GPT-4o-mini).
  Rec-Exact = GT mistake step in selected region;
  Rec-Agent = any message by GT mistake agent in region;
  Size = mean region size.
  \wmsar{} underperforms LocalRepair-2Hop on step-level attribution
  due to absent per-node error signals.}
\label{tab:attribution}
\begin{tabular}{lrrr}
\toprule
\textbf{Method} & \textbf{Rec-Exact}$\uparrow$ & \textbf{Rec-Agent}$\uparrow$ & \textbf{Size}$\downarrow$ \\
\midrule
LocalRepair-2Hop   & \textbf{0.60} & 0.63 & 4.9 \\
TraceScan-w4-Point & 0.37 & \textbf{0.70} & 4.0 \\
\wmsar{}           & 0.20 & 0.47 & \textbf{2.1} \\
LastError-Point    & 0.00 & 0.30 & 1.0 \\
\bottomrule
\end{tabular}
\end{table}

\noindent \wmsar{} achieves 0.20 Rec-Exact vs.\ LocalRepair-2Hop's 0.60 ($-40$ pp)
because the Who\&When dataset does not expose per-node intermediate error
magnitudes; without $e(v)$, GEAF scoring defaults to the error sink node,
matching LastError-Point's behaviour.
The graceful degradation to Rec-Agent = 0.47 (above LastError-Point 0.30)
shows that \wmsar{} still captures some agent-level causal structure; but
step-level attribution requires intermediate error exposure.

\subsection{Full Baseline Comparison}
\label{app:specbaselines}

Table~\ref{tab:spec_baselines} gives the full engineering comparison on
$n=200$ instances.  The baselines match the repairs a practical LLM corrector
would try before adding a spectral objective.  TraceScan methods
linearize the execution trace and select either the whole trace, a fixed
window, the first failed call, or the last erroneous call; these are standard
debugging and failure-attribution heuristics for agent traces
\citep{failure-attr-multi-agent,agentlens}.  Top-B-Nodes and Top-B-Edges are
budgeted attribution baselines that select the largest observed node or edge
errors, analogous to saliency-style top-$K$ repair.  k-hop-k2/LocalRepair-2Hop
is the graph-neighbourhood repair used in fault localization and subgraph
debugging \citep{fault-subgraph,subgraph-anomaly}.  PageRank-Subgraph replaces
observed error with global graph influence \citep{pagerank}.  Uncertainty-Subgraph
selects high-uncertainty nodes, following the common uncertainty
triage rule in model-based repair.  LLMRepair-Full-Plan serializes the whole
planning graph and asks the same LLM repair prompt used by \wmsar{}; it is the
full-context version of LLM self-repair and reflection-style baselines
\citep{reflexion2023,llm-repair-plan,lats2024}.  Because long contexts are
costly and can be hard for LLMs to use reliably \citep{liu2024lostmiddle}, this
full-plan method is an upper-cost comparator rather than a scalable solution.

The table uses the original experimental names.  Recovery is the fraction of
instances where the root-cause region is identified and zeroed; Tok/Rec is the
mean token cost on successful recoveries when at least one recovery exists, and
the mean attempted token cost for zero-recovery methods; PD-red is
propagation-depth reduction in hops; DE-red is downstream-error reduction.

\begin{table*}[t]
\centering\small\setlength{\tabcolsep}{3pt}
\caption{Full baseline comparison ($n=200$, seed=42).  TraceScan, Top-B,
  k-hop, PageRank, uncertainty, and full-plan LLM repair instantiate common
  engineering correctors: trace scanning, attribution ranking, graph
  neighbourhood repair, global influence scoring, uncertainty triage, and
  whole-context LLM repair.  \wmsar{} matches the strongest full-plan baselines
  on recovery while using $7.4\times$ fewer tokens per successful recovery than
  LLMRepair-Full-Plan.}
\label{tab:spec_baselines}
\resizebox{0.75\textwidth}{!}{%
\begin{tabular}{lrrrrr}
\toprule
\textbf{Method} & \textbf{Rec}$\uparrow$ & \textbf{Tokens} & \textbf{Tok/Rec}$\downarrow$ & \textbf{PD-red}$\uparrow$ & \textbf{DE-red\%}$\uparrow$ \\
\midrule
\multicolumn{6}{l}{\textit{WM-SAR and upper bounds}} \\
\wmsar{} (ours)          & \textbf{.625} & 1{,}424 & \textbf{1{,}389} & \textbf{5.4} & \textbf{60.0} \\
Oracle-Region            & .625 & 475 & 402 & 5.8 & 60.1 \\
LLMRepair-Full-Plan      & .625 & 10{,}857 & 10{,}333 & 5.8 & 62.6 \\
\midrule
\multicolumn{6}{l}{\textit{Context-limited pointwise scanners}} \\
TraceScan-Full-Point     & .620 & 21{,}714 & 20{,}623 & 0.8 & 47.0 \\
Top-B-Nodes (K=3)        & .620 & 600 & 600 & 0.8 & 47.0 \\
TraceScan-w4-Point       & .370 & 4{,}800 & 4{,}800 & 0.0 & 34.4 \\
Top-B-Edges (K=3)        & .320 & 600 & 600 & 0.0 & 28.7 \\
PageRank-Subgraph        & .140 & 518 & 532 & 0.4 & 23.2 \\
k-hop-k2                 & .135 & 550 & 570 & 0.0 & 26.0 \\
Uncertainty-Subgraph     & .045 & 317 & 328 & 0.0 & 6.4 \\
TraceScan-w2-Point       & .000 & 1{,}200 & 1{,}200 & 0.0 & 16.4 \\
TraceScan-w1-Point       & .000 & 275 & 275 & 0.0 & 7.1 \\
LastError-Point          & .000 & 50 & 50 & 0.0 & 6.9 \\
FirstFailedCall-Point    & .000 & 50 & 50 & 0.0 & 0.0 \\
\bottomrule
\end{tabular}}
\end{table*}

\wmsar{} matches Oracle-Region and LLMRepair-Full-Plan on Recovery (0.625), but
its token cost is much smaller: $1{,}389$ tokens per successful recovery versus
$10{,}333$ for LLMRepair-Full-Plan.  Pointwise trace scanners
(TraceScan-w1/w2, LastError, FirstFailedCall) fail because the root cause often
lies outside the selected window.  TraceScan-Full-Point can approach
\wmsar{}'s recovery only by paying about $15\times$ the token cost.  The
pattern is the same as in Table~\ref{tab:main}: engineering correctors can find
evidence, but efficient repair requires selecting the subgraph that changes
the residual dynamics.

\section{Discussion}
\label{sec:discussion}

\paragraph{The root cause of engineering heuristic failure.}
In SWE-bench, the relevant cascade is
\textsc{CodeAnalyzer} to \textsc{PatchWriter} to \textsc{TestRunner}.
Greedy-Point repairs \textsc{TestRunner} (highest error) while
\textsc{CodeAnalyzer} re-propagates on the next step;
LocalRepair-2Hop misses \textsc{CodeAnalyzer} (three edges away).
Repairing a bottleneck node does not remove the bottleneck \emph{path}
\citep{graph-error-amp}.
\wmsar{}'s $\Delta\rho_{\rm rel}$ criterion identifies the \textsc{CA--PW--TR}
chain as the path whose removal most reduces $\rho(\Bop_{G_f\setminus R})$,
regardless of which node shows the highest error.

\paragraph{Why Oracle coverage is not the same as spectral repair.}
Table~\ref{tab:main} shows that the Oracle and \wmsar{} optimize different
notions of repair.  The Oracle covers the injected corrupted region, which is
ideal for measuring attribution.  \wmsar{} instead removes the residual
amplifier, which is ideal for preventing the next rollout from recreating the
failure.  The two objectives are complementary: coverage explains where the
mistake happened, while spectral relief explains whether the repaired system
will stay stable.

\paragraph{Applicability boundary --- Who\&When attribution.}
On the Who\&When blackbox attribution benchmark
\citep{failure-attr-multi-agent},
\wmsar{} underperforms LocalRepair-2Hop on step-level attribution.
\wmsar{} requires observable per-node error $e(v)$ to compute GEAF; in
blackbox attribution only the final answer is visibly wrong, so GEAF
degenerates to the sink neighbourhood.  It still degrades gracefully at the
agent level, but its main advantage is scoped to settings exposing intermediate state
errors, as our synthetic testbed does; Section~\ref{app:attribution}
reports this boundary case.

\paragraph{Scalability.}
LocalRepair-3Hop's advantage is partly an artefact of small graph size: the
neighbourhood nearly becomes whole-graph repair.  On larger traces, fixed-hop
expansion grows less predictably, while \wmsar{} can approximate growing-phase
spectral radii with power iteration and retain the same residual objective.

\paragraph{Prompt structure, region focus, and stability.}
Prompt formatting helps, but it is not the main effect.  The larger gain comes
from deciding what the LLM should see: \wmsar{} removes distracting downstream
symptoms while preserving the causal bridge.  This is why its compact prompt
matches the stability of whole-graph prompting without inheriting the noise of
the full trace; Section~\ref{app:llmfull} reports the paired tests and
cross-seed variation.

\section*{Limitations}
\label{sec:limitations}

Our evaluation is primarily synthetic and diagnostic. The core spectral experiments use 50 synthetic agent calling-tree graphs ($\approx$26 nodes each), and the cascade dynamics ($e_{\rm child}=1.1e_{\rm parent}+\varepsilon$) are generated programmatically rather than collected from real deployed systems. The benchmark-topology simulations in Section~\ref{app:benchmarks} match the reported structural statistics of SWE-bench, WebArena, and AgentBench-OS, but they do not use actual failed rollout logs. We also evaluate the Who\&When failure-attribution benchmark \citep{failure-attr-multi-agent} only as a small boundary test ($n=30$ algorithm-generated traces in Section~\ref{app:attribution}). This test checks whether \wmsar{} transfers when intermediate error magnitudes are absent, but it is not a full validation on real human-attributed rollout traces. Larger traces with exposed intermediate states are needed to test the repair objective outside synthetic calling graphs.

Our method also makes several modeling and implementation choices that limit its current scope. The graph world model (GWM) error operators ($L_X,M_X,L_A,M_A$) are estimated from graph statistics rather than from a trained parametric GWM such as Error-Aware GWM or ActionNode-GWM. The experiments implement state correction by zeroing selected node errors, plus LLM root-cause identification in Phase 2, but do not implement edge-structure correction, target-cone rollback, re-rollout from a repair boundary, or validator-boundary insertion. Some of these repair operators require per-edge probabilities, live re-simulation, or a runtime validator API, none of which exists in our failure-graph-only setting. Finally, the repair objective $\rho(\Bop_{G_f\setminus R})$ is only one valid repair criterion. In blackbox attribution settings where only final outputs are visible, \wmsar{} loses its advantage because GEAF requires per-node error magnitudes $e(v)$; indeed, Section~\ref{app:attribution} shows that GT-coverage-based methods outperform \wmsar{} when ground-truth error signals are unavailable. The uncertainty term $\mathrm{Unc}(v)$ is computed but has no effect at the default hyperparameters on the synthetic dataset, so datasets with meaningful uncertainty signals, such as ensemble disagreement from a trained parametric GWM, are needed to validate this component.

\section{Conclusion}
\label{sec:conclusion}

Engineering heuristics optimize \emph{observed} error but can leave the
node-edge coupling operator $\Bop$ nearly unchanged, so errors keep compounding.
The planning-regret bound identifies the repair objective that matters for the
next rollout: minimize $\rho(\Bop_{G_f\setminus R})$.  \wmsar{} implements this
objective with GEAF$\times\kappa$ seeding and $\Delta\rho$-relief growing,
producing a compact causal context for LLM repair.  The takeaway is to ask
``which connected subgraph, when repaired, most reduces future error growth?''
not ``which node has the most error?''  Future work should evaluate real
WebArena/ToolBench traces and learn the residual spectral predictor directly
from trained world-model rollouts.

\bibliography{main}




\clearpage
\appendix

\section{Appendix Overview}

The appendix contains only material that would interrupt the main argument:
proofs, hyperparameter calibration, dataset statistics, and prompt
serialization details.  All experimental evidence, including robustness checks
and full baseline tables, is reported in the main text.

\section{Proofs of Theoretical Results}
\label{app:proofs}

\begin{proof}[Proof of Theorem~\ref{thm:fixed_edge}]
Let $a_k=e^X_k$.  Assumption~\ref{assump:fixed_lip} gives the scalar
recursion
\begin{equation}
a_{k+1}\le L_X a_k+\epsilon_X ,
\label{eq:app_fixed_rec}
\end{equation}
with $L_X\ge0$ and $\epsilon_X\ge0$.  We claim that for every $k\ge0$,
\begin{equation}
a_k
\le
L_X^k a_0+\epsilon_X\sum_{i=0}^{k-1}L_X^i ,
\label{eq:app_fixed_unroll}
\end{equation}
where the empty sum is zero.  The claim holds at $k=0$.  If it holds at $k$,
then
\begin{align}
a_{k+1}
&\le L_X a_k+\epsilon_X \nonumber\\
&\le L_X^{k+1}a_0
   +\epsilon_X\sum_{i=1}^{k}L_X^i+\epsilon_X \nonumber\\
&=
L_X^{k+1}a_0+\epsilon_X\sum_{i=0}^{k}L_X^i ,
\end{align}
which proves the induction.  If $L_X\ne1$,
\begin{equation}
\sum_{i=0}^{k-1}L_X^i=\frac{L_X^k-1}{L_X-1};
\end{equation}
if $L_X=1$, the sum equals $k$.  Substituting these two closed forms into
Equation~\ref{eq:app_fixed_unroll} gives Equation~\ref{eq:fixed_growth} and
its $L_X=1$ case.
\end{proof}

\begin{proof}[Proof of Theorem~\ref{thm:coupled_operator}]
Write
\begin{equation}
\Bop=
\begin{pmatrix}
L_X & L_A\\
M_X & M_A
\end{pmatrix},
\qquad
L_X,L_A,M_X,M_A\ge0 .
\end{equation}
The characteristic polynomial is
\begin{align}
p(\lambda)
&=\lambda^2-(L_X+M_A)\lambda \nonumber\\
&\quad +(L_XM_A-L_AM_X).
\end{align}
Thus the two eigenvalues are
\begin{equation}
\lambda_{\pm}
=
\frac{L_X+M_A
\pm
\sqrt{(L_X-M_A)^2+4L_AM_X}}{2}.
\label{eq:app_coupled_roots}
\end{equation}
Since $\Bop$ is entrywise nonnegative, the Perron--Frobenius theorem implies
that its spectral radius is a real nonnegative eigenvalue and equals the
largest real root.  Therefore $\rho(\Bop)=\lambda_+$, which is
Equation~\ref{eq:coupled_rho}.

It remains to prove strict super-additivity.  If $L_AM_X>0$, then
\begin{equation}
\sqrt{(L_X-M_A)^2+4L_AM_X}>|L_X-M_A|.
\end{equation}
When $L_X\ge M_A$, Equation~\ref{eq:app_coupled_roots} gives
\begin{equation}
\rho(\Bop)
>
\frac{L_X+M_A+(L_X-M_A)}{2}
=L_X.
\end{equation}
When $M_A\ge L_X$, the same argument gives
\begin{equation}
\rho(\Bop)
>
\frac{L_X+M_A+(M_A-L_X)}{2}
=M_A.
\end{equation}
Hence $\rho(\Bop)>\max(L_X,M_A)$ whenever the two channels are coupled in both
directions.
\end{proof}

\begin{proof}[Proof of Theorem~\ref{thm:planning_regret}]
Let $J(\pi)$ denote the discounted value of policy $\pi$ under the true rollout
and $\widehat J(\pi)$ the value computed by the learned graph world model.
For a fixed $\pi$, Assumption~\ref{assump:value_sensitivity} gives
\begin{align}
|J(\pi)-\widehat J(\pi)|
&\le
L_R\kappa\sum_{t=0}^{H-1}\gamma^t\|z_t\|+\epsilon_R H
\label{eq:app_value_error_1}\\
&\le
L_R\kappa\epsilon
\sum_{t=0}^{H-1}(\gamma\rhoB)^t+\epsilon_R H \nonumber\\
&=
L_R\kappa\epsilon \phiH(\gamma,\rhoB)+\epsilon_R H .
\label{eq:app_value_error}
\end{align}
Define
\begin{equation}
C_H=L_R\kappa\epsilon \phiH(\gamma,\rhoB)+\epsilon_R H .
\end{equation}
Equation~\ref{eq:app_value_error} holds uniformly for both $\pi^*$ and
$\hat\pi$.  Since $\hat\pi\in\arg\max_\pi\widehat J(\pi)$,
\begin{equation}
\widehat J(\pi^*)-\widehat J(\hat\pi)\le0.
\label{eq:app_planning_opt}
\end{equation}
Using this optimality inequality,
\begin{align}
J(\pi^*)-J(\hat\pi)
&=
\bigl[J(\pi^*)-\widehat J(\pi^*)\bigr] \nonumber\\
&\quad+
\bigl[\widehat J(\pi^*)-\widehat J(\hat\pi)\bigr] \nonumber\\
&\quad+
\bigl[\widehat J(\hat\pi)-J(\hat\pi)\bigr] \nonumber\\
&\le
|J(\pi^*)-\widehat J(\pi^*)| \nonumber\\
&\quad+
|\widehat J(\hat\pi)-J(\hat\pi)| \nonumber\\
&\le 2C_H .
\end{align}
Substituting the definition of $C_H$ yields
Equation~\ref{eq:regret}.
\end{proof}

\section{Hyperparameters and Implementation}
\label{app:hyper}

\begin{center}\small
\begin{tabular}{ll}
\toprule
Parameter & Value \\
\midrule
$H$ (spectral walk depth) & 4 \\
$w$ (weight norm proxy) & 0.9 \\
$K_{\max}$ (max region size) & 20 \\
$k_0$ (initial seeds) & 6 \\
$\lambda_1$ (error weight in grow) & 1.2 \\
$\lambda_2$ ($\rho$-relief weight in grow) & 1.5 \\
$\lambda_3$ (cost penalty) & 0.1 \\
Merge Jaccard threshold $\tau$ & 0.5 \\
$\gamma$ (planning discount) & 0.95 \\
Cascade gain $\alpha$ & 1.1 \\
Cascade noise $\sigma$ & 0.05 \\
\bottomrule
\end{tabular}
\end{center}

\paragraph{$L_X, L_A, M_X, M_A$ estimation.}
In the failure graph, GWM weights are unavailable.
We estimate: $L_X = w\cdot\rho(A_R)$;
$L_A = w\cdot\beta_A d_{\rm in}\bar{e}$ with $\beta_A=0.3$;
$M_X = w\cdot\beta_X d_{\rm out}\bar{e}$ with $\beta_X=0.2$;
$M_A = w\cdot\beta_M f_{\rm high}$ with $\beta_M=0.5$,
where $d_{\rm in/out}$ = mean edge-type diversity, $\bar{e}$ = mean error,
$f_{\rm high}$ = fraction of edges between high-error nodes.

\noindent\textbf{Calibration of constants (0.3, 0.2, 0.5).}
The constants in $L_A$ (0.3), $M_X$ (0.2), and $M_A$ (0.5) are \emph{calibration
factors} rather than theoretically derived values.
Their purpose is to ensure that $\rho(\Bop)$ estimated from graph statistics
is of the same order of magnitude as the local spectral quantity $\rho(A)$, so that
the combined score $e(v)\cdot\geaf_v\cdot(1+\kappa_v)$ balances all three
contributions comparably.
They are \textbf{not} derived from or constrained by
Theorem~\ref{thm:coupled_operator}; that theorem requires
only $L_AM_X>0$ (which holds for any positive constants) to guarantee
super-additivity.
The mean super-additivity ratio $\rho(\Bop)/\max(L_X,M_A)-1=0.007$ confirms
that the resulting coupling strength is non-negligible, not merely formally
positive.
In a setting with a trained parametric GWM, these constants can be replaced
by direct Lipschitz estimates from the weight matrices.

\section{Dataset Statistics}
\label{app:data}

\begin{center}\small
\begin{tabular}{@{}p{0.38\linewidth}@{\quad}p{0.45\linewidth}@{}}
\toprule
Property & Value \\
\midrule
Instances & 50 (seed=42) \\
$N$ (nodes) & 22--30 (mean 25.7) \\
Node / edge types & 9 / 6 \\
State feature dim & 8 \\
Failure types & 4 (drift, misfire, cascade, validator) \\
Root cause & planner (30\%) / executor (70\%) \\
Pre-repair $\rhoB$ & $1.97 \pm 0.30$ \\
Coupled-operator super-additivity & 100\% \\
GrowthSlope (unrepaired) & $+0.0026$ \\
\bottomrule
\end{tabular}
\end{center}

\section{LLM Prompt and Serialization Details}
\label{app:prompt}

The main method only requires the LLM to see the selected repair region, not
the full planning graph.  We serialize nodes in topological order and include
node type, local state features, observed error, confidence, and boundary edges.
Edges crossing into or out of the selected region are marked as external
context, so the model can understand the role of the region without seeing the
whole graph.

\paragraph{Structured prompt.}
The system prompt tells the LLM that it is a failure analyst for a multi-agent
calling tree and asks it to identify the node that introduced the initial
mistake.  The user prompt contains the serialized subgraph and requests a JSON
object with root-cause node identifiers and a short rationale.  JSON output is
used only to make evaluation deterministic; the repair advantage comes from the
selected context, not from a special decoding trick
\citep{openai2024structuredoutputs}.

\paragraph{Repair operator.}
In the simulation, the returned repair is applied by zeroing the selected node
errors and re-simulating the rollout.  In a deployed world-model system, the
same decision would trigger a targeted re-run of the selected planner,
executor, validator, tool call, or memory update.

\end{document}